\newcommand{\bbet}{\sm{\beta}}
\newcommand{\bnorm}{\beta}
\newcommand{\x}{\m{x}}
\newcommand{\y}{\m{y}}
\newcommand{\F}{\m{F}}
\renewcommand{\H}{\m{H}^{\infty}}
\newcommand{\X}{\m{X}}
\renewcommand{\a}{\alpha}
\newcommand{\eps}{\epsilon}
\newcommand{\rrat}{R}
\newcommand{\h}{\m{h}}
\newcommand{\w}{\m{w}}
\newcommand{\thet}{\sm{\theta}}
\newcommand{\Lo}{\mathcal{L}}
\newtheorem*{arorathm}{Theorem (\cite{arora_finegrained_2019}, 3.3)}
\newtheorem*{aroracor}{Corollary (\cite{arora_finegrained_2019}, 6.2)}
\newtheorem*{aroracorext}{Extension of Corollary (\cite{arora_finegrained_2019}, 6.2)}
\newtheorem{theorem}{Theorem}
\newtheorem{subtheorem}{Theorem}[theorem]
\newtheorem{corollary}{Corollary}[theorem]
\newtheorem{lemma}{Lemma}
\newtheorem{definition}{Definition}
\newtheorem*{rep@theorem}{\rep@title}
\newcommand{\newreptheorem}[2]{%
\newenvironment{rep#1}[1]{%
 \def\rep@title{#2 \ref{##1}}%
 \begin{rep@theorem}}%
 {\end{rep@theorem}}}
\begin{document}

\twocolumn[
\icmltitle{Learning the gravitational force law and other analytic functions}



\icmlsetsymbol{equal}{*}

\begin{icmlauthorlist}
\icmlauthor{Atish Agarwala}{equal,goog}
\icmlauthor{Abhimanyu Das}{goog}
\icmlauthor{Rina Panigrahy}{goog}
\icmlauthor{Qiuyi Zhang}{goog}
\end{icmlauthorlist}

\icmlaffiliation{goog}{Google Research}

\icmlcorrespondingauthor{Atish Agarwala}{thetish@google.com}

\icmlkeywords{Machine Learning Theory, Kernel learning, ICML}

\vskip 0.3in
]



\printAffiliationsAndNotice{}  

\begin{abstract}
Large neural network models have been successful in learning functions
of importance in many branches of science, including physics, chemistry
and biology. Recent theoretical work has shown explicit learning bounds
for wide networks and kernel methods on some simple classes of functions,
but not on more complex functions which arise in practice. We extend these
techniques to provide learning bounds for analytic functions on the sphere
for any kernel method or equivalent infinitely-wide network with the corresponding
activation function trained with SGD. We show that a wide, one-hidden layer ReLU
network can learn analytic functions with a number of samples proportional
to the derivative of a related function. Many functions important
in the sciences are therefore efficiently learnable. As an example,
we prove explicit bounds on learning the many-body gravitational force
function given by Newton's law of gravitation. Our theoretical bounds suggest
that very wide ReLU networks (and the corresponding NTK kernel) are better
at learning analytic functions as compared to kernel learning with Gaussian kernels.
We present experimental evidence that the many-body gravitational force function
is easier to learn with ReLU networks as compared to networks with exponential activations.
\end{abstract}

\section{Introduction}

\label{sec:intro}

Despite the empirical success of neural networks and other
highly parameterized machine learning methods, a major open question
remains: why do these methods perform well? Classical
learning theory does not predict or explain the success of
large, often overparameterized models \cite{lawrence_lessons_1997,harvey_nearlytight_2017,bartlett_nearlytight_2019}. Most models
are highly expressive 
\cite{poole_exponential_2016,raghu_expressive_2017}, but can still generalize when 
trained
with many less data points than parameters 
\cite{he_deep_2016,szegedy_rethinking_2016,dai_transformerxl_2019}.

In particular, many functions of importance to physics 
\cite{mills_deep_2017,gao_efficient_2017,carrasquilla_machine_2017,raissi_hidden_2018},
chemistry \cite{rupp_fast_2012,faber_prediction_2017}, and biology 
\cite{kosciolek_accurate_2016,ainscough_deep_2018,xu_distancebased_2019}, among other fields,
can be learned using neural networks and kernel methods.
This leads to the question:
can we understand what types of functions can be
learned efficiently with particular methods?
Recent theoretical work has focused on answering this question by
constructing bounds on the generalization error, given properties
of the model and data distribution.
Specifically, for very wide networks with specific kernels or activations 
\cite{jacot_neural_2018,du_gradient_2019,allen-zhu_learning_2019,arora_finegrained_2019,lee_wide_2019}, data-dependent 
generalization bounds can be derived by relating wide networks to kernel learning with a specific network-induced kernel, known as 
the  \emph{neural tangent kernel} (NTK)
\cite{jacot_neural_2018}. These bounds while not tight, can be used to mathematically justify why neural networks
trained on noisy labels can achieve low training error but will fail to generalize, while those trained on real data
generalize well. It is unclear, however, whether they
give a sense of the relative difficulty of learning different types
of functions with different types of methods.

\subsection{Our contributions} 

To explain and further understand the efficacy of deep learning in numerous applications, we present generalization bounds on learning analytic functions
on the unit sphere
with any kernel method or sufficiently wide neural network
(Section \ref{sec:kernel_learning}).
In the particular case of a wide, one-hidden layer ReLU network, we present a succinct bound on the number of samples needed to 
guarantee low test error. Informally, we prove the following:
\begin{repsubtheorem}{thm:univar}[informal]
Given an analytic function $g(y)$,
the function $g(\bbet\cdot\x)$, for fixed
$\bbet\in\mathbb{R}^{d}$ and inputs $\x\in\mathbb{R}^{d}$ is
learnable to error $\eps$ with
$O((\bnorm \tilde{g}'(\bnorm)+\tilde{g}(0))^2/\eps^2)$ samples,
with
\begin{equation}
\tilde{g}(y) = \sum_{k=0}^{\infty}|a_k|x^k
\end{equation}
where the $a_k$ are the power series coefficients of $g(y)$.
\end{repsubtheorem}

We prove a much more general version
for multivariate analytic functions:
\begin{repsubtheorem}{thm:multivar}[informal]
Given a multivariate analytic function $g(\x)$ for $\x$ in the
$d$-dimensional unit ball, there is a function
$\tilde{g}(y)$ as defined in Theoreom~\ref{thm:univar} such that $g(\x)$ is learnable to error $\eps$
with $O(\tilde{g}'(1)/\eps^2)$ samples.
\end{repsubtheorem}

Using Theorem \ref{thm:multivar}, we develop a \emph{calculus of bounds} - 
showing that the sum, product, and composition of learnable functions is also learnable,
with bounds constructed using the familiar product and chain rules of
univariate calculus (Corollaries \ref{cor:prod_rule}, 
\ref{cor:chain_rule}, and \ref{cor:multivar_anal}).
These bounds also can be applied when
$g(\x)$ has a singularity, provided that the data is sampled
away from the singularity.

Since many functions used in scientific theories and models fall into this function class, our calculus allows for a clear quantifiable explanation for why neural network models have had successful applications to
many of those fields.
As an important example
from physics, we consider
the forces $\{\F_{i}\}$ between $k$ bodies with positions $\x_{i}$ interacting via
Newtonian gravitation:
\begin{equation}
\label{eq:gravity}
\F_{i} = \sum_{j\neq i} \frac{m_{i}m_{j}}{||\x_{i}-\x_{j}||^3}(\x_{j}-\x_{i})
\end{equation}
We show that,
as long as there is some minimum distance between the
$\x_{i}$ and $\x_{j}$, we can still use the calculus of bounds to show that the $k$
force vectors can be efficiently learned. We prove the following:
\begin{reptheorem}{thm:gravity_bound}[informal]
A wide, one-hidden
layer ReLU network
can learn the force law between $k$ gravitational bodies up to
error $\eps$ using only
$k^{O(\ln(k^2/\eps))}$ samples.
\end{reptheorem}

Lastly, we compare our generalization bounds for the ReLU network with those for more traditional kernel learning methods. 
Specifically, we show asymptotically
weaker bounds for other models, including for
kernel regression with Gaussian kernels, providing some theoretical evidence why neural networks with ReLU activation (or
their
induced kernels) often achieve superior performance than the standard Gaussian kernels. We support our theoretical work with 
numerical experiments on a synthetic dataset that show that wide networks can learn the gravitational force law with
minimal fine-tuning, and achieve lower test and generalization error than the standard Gaussian kernel counterpart. Our 
results suggest that networks with better theoretical learning bounds may in fact
perform better in practice as well, even when theoretical bounds are pessimistic.

\section{Prelimaries}
\subsection{Background}
\label{sec:prev_work}

Classical tools like VC dimension \cite{vapnik_nature_2000} are insufficient to explain the performance of
overparameterized neural networks \cite{bartlett_nearlytight_2019}. Networks generalize well even though
are often very expressive
\cite{poole_exponential_2016,raghu_expressive_2017}, can memorize random data \cite{zhang_understanding_2017,arpit_closer_2017},
and have correspondingly large VC
dimensions \cite{maass_neural_1994,harvey_nearlytight_2017,bartlett_nearlytight_2019}.

Some progress has been made by focusing on the ``implicit regularization''
provided by training dynamics \cite{gunasekar_characterizing_2018,du_power_2018,arora_stronger_2018}.
In particular, SGD biases networks to
solutions with small weight changes under the $\ell_2$ norm (plus any additional
regularization), which has been used to inspire various norm-based bounding
strategies \cite{neyshabur_role_2018,allen-zhu_learning_2019,arora_finegrained_2019}. While many of
these bounds must be computed post-training, some bounds can be computed
using the architecture alone, and show that function classes
like outputs of small networks with smooth activations can be
efficiently learned with large networks \cite{allen-zhu_learning_2019,arora_finegrained_2019, du_gradient_2019}.

Recently, learning bounds for very wide networks have been derived by
combining insights on learning dynamics with more classical generalization error 
bounds in kernel learning. In the limit of infinite width, the total
change in each individual parameter is small, and the outputs of
the network are linear in the weight changes and it can be shown that the learning dynamics
are largely governed by the \emph{neural tangent kernel}
(NTK) of the corresponding network
\cite{jacot_neural_2018}. Given the $n\times 1$ dimensional
vector $\hat{\m{y}}(\thet)$ of model outputs on the training data,
as a function of trainable parameters $\thet$, the
\emph{empirical} tangent kernel is the
$n\times n$ matrix
given by
\begin{equation}
\m{H} = 
\frac{\partial \y}{\partial \thet}\left(\frac{\partial \y}{\partial \thet}\right)^{\tpose}
\end{equation}
Since the derivatives are only taken with respect to the parameters
that are trained, the tangent kernel is different if
particular layers of the network are fixed after initialization.
The empirical kernel
concentrates around some limiting matrix $\H$ in the limit
of a large number of parameters. The NTK kernel evaluated on
two inputs $\x, \x'$ corresponds to the limiting value of
$\H$ evaluated at $\x$ and $\x'$.

As an example, the kernel function
$K(\x,\x')$ for inputs $\x$ and $\x'$
into a single hidden layer fully-connected network,
with non-linearity $\phi$,
with {\it only final layer weights trained} is
\cite{jacot_neural_2018,lee_wide_2019}
\begin{equation}
K(\x,\x') = \expect[\phi(\x\cdot \m{z})\phi(\x'\cdot\m{z})]
\label{eq:ntk_kernel}
\end{equation}
where the $\m{z}_i$ are i.i.d. Gaussian random variables with the same variance
$\sigma^2$ as the hidden layer weights. Note here that
$H$ is a function of $\x\cdot\x$, $\x\cdot\x'$ and $\x'\cdot\x'$ only.

With MSE loss, the learning dynamics
in the wide network regime
is similar to
kernel regression or kernel learning.
Rademacher complexity \cite{koltchinskii_rademacher_2000} can be used
to generate learning bounds for wide networks near the infinite width limit, such as
in \cite{arora_finegrained_2019}. The following theorem shows
that if for training labels $\y$, the product $\y^{\tpose}(\H)^{-1}\y$ is bounded
by $M_{g}$, then wide networks trained with SGD have error less than $\eps$ when
trained with $O(M_g/\eps^2)$ samples:

\begin{arorathm}
\label{thm:arora}
Let $g(\x)$ be a function over $\mathbb{R}^{d}$, and
$\mathcal{D}$ be a distribution over the inputs. Let
$\Lo$ be a 1-Lipschitz loss function. Consider training
a two-layer ReLu network to learn $g$ using SGD with MSE loss on $n$
i.i.d. samples from $\mathcal{D}$. Define the generalization
error $E_{gen}$ of the trained model $\hat{g}(\x)$ as
\begin{equation}
E_{gen} = \expect_{\x\sim\mathcal{D}}[\Lo(g(\x),\hat{g}(\x))]
\label{eq:gen_error_def}
\end{equation}

Fix a failure probability $\delta$.
Suppose that with probability greater than $\delta/3$, $\H$ has smallest eigenvalue $\lambda_0>0$.
Then for a wide enough network,
with probability at least $1-\delta$, the generalization error
is at most
\begin{equation}
E_{gen} \leq \sqrt{\frac{2\y^{\tpose}(\H)^{-1}\y}{n}}+O\left(\sqrt{\frac{\log(n/\lambda_0\delta)}{n}}\right)
\label{eq:gen_error_eq}
\end{equation}
where $\H$ is the $n\times n$ Gram matrix whose elements correspond
to the NTK kernel evaluated at pairs of the $n$ i.i.d. training examples,
and $\y$ is the $n$-dimensional vector of training labels.

If there is some $n$-independent constant $M_{g}$ such that
$\y^{\tpose}(\H)^{-1}\y\leq M_{g}$, then with probability
at least $1-\delta$,
$O([M_{g}+\log(\delta^{-1})]/\eps^2)$ samples are sufficient
to ensure generalization error less than $\eps$.
\end{arorathm}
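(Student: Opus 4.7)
The plan is to adapt the standard Rademacher-complexity analysis of kernel regression via the NTK correspondence, in three main steps. \emph{Step one} is to show that for a sufficiently wide two-layer ReLU network, gradient flow (and SGD with a small step size) on the MSE loss approximately linearizes around initialization, so the vector of model outputs $\hat{\y}_t$ evolves under $\partial_t \hat{\y}_t=-\m{H}(\y-\hat{\y}_t)$ with $\m{H}$ the empirical NTK. Concentration of $\m{H}$ around $\H$, plus a perturbation argument tracking weight movement along the trajectory, lets us replace $\m{H}$ by $\H$ throughout training, provided the width is polynomial in $n$, $1/\lambda_0$, and $\log(1/\delta)$. The assumption $\lambda_0>0$ makes $\H$ invertible and guarantees convergence of the flow.

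\emph{Step two} identifies the learned predictor with the minimum-RKHS-norm interpolant of $\y$ under the NTK, whose squared RKHS norm is exactly $\y^{\tpose}(\H)^{-1}\y$. \emph{Step three} bounds the generalization error of this interpolant by Rademacher complexity: the ball of radius $B$ in the NTK RKHS has empirical Rademacher complexity $O(B/\sqrt{n})$ when the kernel diagonal is bounded, which after composition with the 1-Lipschitz loss $\Lo$ and a standard one-sided uniform-convergence step yields the leading term $\sqrt{2\y^{\tpose}(\H)^{-1}\y/n}$. The residual $O(\sqrt{\log(n/\lambda_0\delta)/n})$ collects (i) the finite-width and kernel-perturbation error from step one, (ii) the empirical-to-population Rademacher gap, and (iii) failure-probability overhead. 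The sample-complexity corollary follows by setting the right-hand side to $\eps$ and solving for $n$.

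The hard part is step one: quantifying a width $m$ for which the finite-network SGD trajectory is faithfully coupled to the kernel dynamics driven by $\H$. One must establish, uniformly across the whole trajectory and not merely at initialization, that (a) very few ReLU activations flip, (b) every neuron's weight movement is $o(1)$, and (c) the empirical tangent kernel stays within $\eps/\mathrm{poly}(n)$ of $\H$. These three conditions, which are standard but delicate in the NTK literature, are what force the polynomial-in-$(n,\lambda_0^{-1},\log\delta^{-1})$ width requirement implicit in the phrase ``for a wide enough network.'' Once this coupling is quantitative, steps two and three reduce to textbook RKHS arguments.
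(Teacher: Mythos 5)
Your outline is a faithful reconstruction of how this theorem is actually proved in \cite{arora_finegrained_2019}, but note that the present paper does not prove it at all: it is imported verbatim as Theorem 3.3 of that reference, so there is no in-paper proof to compare against. The closest material is Appendix \ref{sec:kernel_two_layer}, which establishes the analogous bound in the strictly easier setting where the lower layer is frozen and only the top-layer weights $\w$ are trained. There, your ``hard part'' (step one) disappears entirely: the model is exactly linear in $\w$, no activations flip and the feature matrix $\h$ never moves, so the paper only needs (i) concentration of $\h\h^{\tpose}$ around $\H$ for large width, (ii) the identity $\h(\h^{\tpose}\h)^{-2}\h^{\tpose}=(\h\h^{\tpose})^{-1}$ showing the minimum-norm interpolant has squared norm $\y^{\tpose}(\h\h^{\tpose})^{-1}\y$, and (iii) the standard Rademacher bound $\sqrt{\y^{\tpose}(\H)^{-1}\y/(2n)}$ for the resulting linear class --- exactly your steps two and three. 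Those steps are sound, with one small caveat: in the full-training case the original argument does not literally identify the SGD solution with the minimum-RKHS-norm interpolant; it bounds the total weight movement from initialization by roughly $\sqrt{\y^{\tpose}(\H)^{-1}\y}$ and then bounds the Rademacher complexity of the class of networks reachable within that distance. Your RKHS phrasing is an equivalent route for the linearized dynamics, but the trajectory-coupling error you describe in step one is precisely what must be controlled to justify it, and you correctly flag that this is where all the technical work lives.
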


\subsection{Notation}

We define $\| \cdot \|$ to be the Euclidean norm, unless otherwise specified and $\x \cdot \x'$ to be the dot product between vectors $\x, \x'$. For a vector $\x$ and a scalar $c$, we define $\x + c = \x + c{\bf 1}$ and other operations analogously. 
For the remainder of this paper, we focus on the learnability of functions under different learning algorithms and so we will define \emph{efficiently learnable}
functions as:
\begin{definition}
\label{def:eff_learnable}
Given a learning algorithm, we say that a function $g$ over a distribution of inputs $\mathcal{D}$
is \emph{efficiently learnable} if, given an error scale $\eps$, with probability greater than $1-\delta$,
the generalization error $\expect_{\x\sim\mathcal{D}}[\Lo(g(\x),\hat{g}(\x))]$ of the trained model
$\hat{g}$ with respect to any 1-Lipschitz loss function $\Lo$
is less than $\eps$ when the training data consists of at least $O([M_g+\log(\delta^{-1})]/\eps^2)$
i.i.d. samples drawn from $\mathcal{D}$, for some $n$-independent constant $M_g$.
\end{definition}

\section{Theory}

\label{sec:theory}

\subsection{Kernel learning bounds}

\label{sec:kernel_learning}

In this section, we extend the bounds derived in \cite{arora_finegrained_2019} 
to any kernel that
can be written as a power series in
the dot product of inputs $\x\cdot\x'$. We emphasize that our kernel learning bounds can be generalized to the setting where we train a wide neural network on our data. In Appendix \ref{sec:kernel_two_layer},
we make this relation rigorously clear and show that
Equation \ref{eq:gen_error_eq} applies when training the upper layer only
of any wide network - which is equivalent to a draw from
the posterior of a Gaussian process with the NTK kernel given by Equation \ref{eq:ntk_kernel}. Therefore, we focus on kernels in this section.

We can extend the following corollary, originally
proved for wide ReLU networks with trainable hidden layer only:

\begin{aroracor}
\label{cor:arora}
Consider the function $g:\mathbb{R}^{d}\to\mathbb{R}$ given by:
\begin{equation}
g(\m{x}) = \sum_{k} a_{k}(\bbet_{k}^{\tpose}\x)^{k}
\end{equation}
Then, if $g$ is restricted to $||\m{x}||=1$, and
the NTK kernel can be written as $H(\x,\x') = \sum_{k}b_k(\x\cdot\x')^k$,
the function can be learned efficiently with
a wide one-hidden-layer network in the sense of Definition \ref{def:eff_learnable}
with
\begin{equation}
\sqrt{M_{g}} = \sum_{k} b_k^{-1/2} |a_{k}| ||\bbet_{k}||_{2}^{k}
\label{eq:arora_bound}
\end{equation}
up to $g$-independent constants of $O(1)$, where $\bnorm_k\equiv ||\bbet_{k}||_{2}$.
In the particular case of a ReLU network, the bound is
\begin{equation}
\sqrt{M_{g}} = \sum_{k} k |a_{k}| ||\bbet_{k}||_{2}^{k}
\label{eq:arora_bound_relu}
\end{equation}
if the $a_k$ are non-zero only for
$k = 1$ or $k$ even.
\end{aroracor}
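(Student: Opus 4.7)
The plan is to control the key quantity $\y^{\tpose}(\H)^{-1}\y$ appearing in Arora's Theorem 3.3 by the squared RKHS norm of $g$ with respect to the NTK kernel $H$, and then estimate that norm term-by-term using the shared power-series structure of $g$ and $H$. The core fact I would invoke is the standard representer bound $\y^{\tpose}(\H)^{-1}\y\leq\|g\|_H^2$, valid whenever $g$ lies in the RKHS of $H$ and $y_i = g(\x_i)$; plugging this into Theorem 3.3 yields exactly $M_g = \|g\|_H^2$.

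To bound $\|g\|_H$, I would decompose $H$ into its monomial components $K_k(\x,\x')=b_k(\x\cdot\x')^k$, each of which admits the explicit feature map $\Phi_k(\x)=\sqrt{b_k}\,\x^{\otimes k}$ on the unit sphere, since $\langle\Phi_k(\x),\Phi_k(\x')\rangle = b_k(\x\cdot\x')^k$. Because $(\bbet_k^{\tpose}\x)^k = \langle\bbet_k^{\otimes k},\x^{\otimes k}\rangle$, the function $\x\mapsto(\bbet_k^{\tpose}\x)^k$ lies in the RKHS of $K_k$ with norm exactly $b_k^{-1/2}\|\bbet_k\|^k$. The standard sum-of-kernels fact then says any function in the RKHS of a single summand $K_k$ sits in the RKHS of $H=\sum_k K_k$ with at most the same norm (take the trivial decomposition with zero components elsewhere), so the triangle inequality applied to the series $g(\x)=\sum_k a_k(\bbet_k^{\tpose}\x)^k$ yields
\begin{equation}
\|g\|_H \;\leq\; \sum_k |a_k|\cdot b_k^{-1/2}\|\bbet_k\|^k,
\end{equation}
which is precisely the claimed $\sqrt{M_g}$. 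For the ReLU specialization, I would invoke the explicit spherical expansion of the one-hidden-layer ReLU NTK worked out in Arora et al., for which $b_k$ is supported on $k=1$ and even $k$ with $b_k^{-1/2}=O(k)$; substituting recovers $\sqrt{M_g}=\sum_k k|a_k|\|\bbet_k\|^k$ up to the stated $O(1)$ constants.

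The main obstacle is making the infinite-sum RKHS decomposition rigorous: one needs the series $H(\x,\x')=\sum_k b_k(\x\cdot\x')^k$ to converge in a strong enough sense (uniform convergence on the unit sphere suffices, given mild decay of $b_k$), and one needs the associated RKHS to inherit the orthogonal direct-sum structure from the spherical-harmonic decomposition of each $(\x\cdot\x')^k$. A secondary issue is that the representer inequality $\y^{\tpose}\H^{-1}\y\leq\|g\|_H^2$ is stated for the population kernel while Theorem 3.3 uses the empirical Gram matrix, but this is handled identically to Arora's original proof of the ReLU case, which serves as the template for the rest of the bookkeeping.
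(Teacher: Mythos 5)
Your proposal is correct, but it takes a genuinely different route from the paper's. You argue at the level of the RKHS: you invoke the representer bound $\y^{\tpose}(\H)^{-1}\y\leq\|g\|_{\mathcal{H}}^{2}$, realize each monomial kernel $b_k(\x\cdot\x')^k$ on the sphere via the feature map $\sqrt{b_k}\,\x^{\otimes k}$, and combine the Aronszajn sum-of-kernels fact with the triangle inequality to get $\|g\|_{\mathcal{H}}\leq\sum_k|a_k|\,b_k^{-1/2}\|\bbet_k\|^{k}$. The paper (following Arora et al., and as carried out explicitly in its proof of Lemma \ref{lem:multivar}) instead stays entirely at the level of the $n\times n$ Gram matrix: it writes $\H=\sum_k b_k\m{K}^{\circ k}$ as a sum of Hadamard powers of the data inner-product matrix, represents the degree-$k$ label vector as $(\X^{\odot k})^{\tpose}\bbet_k^{\otimes k}$ via the Khatri--Rao product, and applies the PSD inequality $\m{P}_{\m{B}}\m{A}^{-1}\m{P}_{\m{B}}\preceq\m{B}^{+}$ with $\m{A}=\H\succeq\m{B}=b_k\m{K}^{\circ k}$ to bound each degree's contribution by $b_k^{-1}a_k^2\|\bbet_k\|^{2k}$, finishing with the triangle inequality for the norm $\sqrt{\y^{\tpose}\H^{-1}\y}$. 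These are two packagings of the same idea --- lower-bound the kernel by a single degree and project onto that degree's feature space --- and the key factors $b_k^{-1}$ and $\|\bbet_k\|^{2k}$ arise identically; your version is more conceptual, while the paper's is self-contained in the finite-sample setting. Two of your stated obstacles are non-issues: you do not need any spherical-harmonic orthogonal decomposition, since a function in the RKHS of one summand lies in the RKHS of the sum with no larger norm for arbitrary sums of PSD kernels, and convergence of $\sum_k|a_k|b_k^{-1/2}\|\bbet_k\|^{k}$ makes the partial sums Cauchy in the RKHS with pointwise limit $g$; and the population-versus-empirical distinction is moot because the $\H$ in Theorem 3.3 is already the Gram matrix of the limiting kernel function evaluated at the sample points, so the representer inequality applies directly.
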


Using Equation \ref{eq:gen_error_eq}
and the arguments of Appendix \ref{sec:kernel_two_layer}, we arrive at the following
extension:
\begin{aroracorext}
Consider a kernel method or appropriately wide network with only the upper layer trained,
with kernel
\begin{equation}
K(\x,\x') = \sum_{k} b_{k}(\x\cdot\x')^k
\label{eq:kernel_series}
\end{equation}
Then the learning bound in Equation \ref{eq:arora_bound} holds for these
models as well.
\end{aroracorext}

Building off of this learning bound, we will prove in Section~\ref{sec:learning_analytic} that all analytic functions are 
efficiently learnable, via both kernel methods and
wide networks.

Equation \ref{eq:arora_bound} suggests that kernels with slowly decaying (but still convergent)
$b_{k}$ will give the best bounds for learning polynomials.
Many popular kernels do not meet this criteria. For example, for inputs
on the sphere of radius $r$, the Gaussian kernel $K(\x,\x') = e^{-||\x-\x'||^2/2}$
can be written as
$K(\x,\x') = e^{-r^2}e^{\x\cdot\x'}$. This has $b_k^{-1/2} = e^{r^2/2}\sqrt{k!}$, which increases
rapidly with $k$. This provides theoretical
justification for the empirically inferior performance of the Gaussian kernel which we will
present in Section \ref{sec:expts}.

Guided by this theory, we focus on
kernels where
$b_{k}^{-1/2} \leq O(k)$, for all $k$.
While the ReLU NTK kernel (with inputs on the sphere) satisfies this 
bound for even positive powers $k$, it fails to satisfy our criteria 
for odd values of $k$. One way to ensure the
bound exists for all $k > 0$ is to construct a kernel by hand:
for example,
\begin{equation}
K(\x,\x') = \sum_{k} k^{-s}(\x\cdot\x')^k
\end{equation}
with $s\in(1,2]$ is a valid slowly decaying kernel on the sphere.

Another approach, which keeps the model similar to those used in 
practice,
is to introduce a novel kernel by applying the following
modification to
the NTK kernel. Consider appending a constant
component to the input $\x$
so that the new input to the network is $(\x/\sqrt{2},1/\sqrt{2})$.
The kernel then becomes:
\begin{equation}
\label{eq:ntk-bias}
K(\x,\x') = \frac{\x\cdot\x'+1}{4\pi}\left(\pi-\arccos\left(\frac{\x\cdot\x'+1}{2}\right)\right)
\end{equation}
Re-writing the power series as an expansion around
$\x\cdot\x' = 0$, we have terms of all powers. An asymptotic analysis
of the coefficients (Appendix \ref{sec:mod_ReLU})
shows that coefficients $b_k$ are asymptotically
$O(k^{-3/2})$ - meeting our needs. In particular, this means that the bound
in Equation \ref{eq:arora_bound_relu} applies to these
kernels, without restriction to even $k$. Note that
for $k=0$,
the constant function $g(\m{x}) = a_{0}$ can be learned with
$\sqrt{M_{g}} = |a_{0}|$ samples.

\subsection{Learning analytic functions}

\label{sec:learning_analytic}

For the remainder of this section,
we assume that we are using a GP/wide network
with a kernel $K$ of the form
$K(\x,\x') = \sum_{k=0}^{\infty} b_k (\x\cdot\x')^k$.
Unless otherwise noted, we also assume that
$b_k \geq k^{-2}$ for large $k$
so Equation \ref{eq:arora_bound_relu} applies for
all powers of $k$.
We will use this to show that all univariate \emph{analytic functions} are efficiently learnable,
and then extend the results to multivariate functions.

Analytic functions are a rich class with a long history of use in the
sciences and applied mathematics. Functions are analytic if they
have bounded derivatives of all orders when extended to the complex
plane. This is equivalent to having a locally convergent power
series representation, a fact which we will exploit for many of our
proofs.

\subsubsection{Univariate analytic functions}

We start with the univariate case and first prove the following:
\begin{subtheorem}
\label{thm:univar}
Let $g(y)$ be a function analytic around $0$, with radius of convergence
$R_{g}$.
Define the \emph{auxiliary function} $\tilde{g}(y)$ by the power series
\begin{equation}
\tilde{g}(y) = \sum_{k=0}^{\infty} |a_{k}| y^k
\end{equation}
where the $a_k$ are the power series coefficients of $g(y)$. Then the function
$g(\bbet\cdot\x)$, for some fixed vector $\bbet\in\mathbb{R}^{d}$ with $||\x|| = 1$ is efficiently
learnable in the sense of Definition \ref{def:eff_learnable}
using a model with the slowly decaying kernel $K$ with
\begin{equation}
\sqrt{M_{g}} = \bnorm \tilde{g}'(\bnorm)+\tilde{g}(0)
\end{equation}
if the norm $\bnorm\equiv||\bbet||_{2}$ is less than $R_{g}$.
\end{subtheorem}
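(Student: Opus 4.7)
My plan is to reduce Theorem~\ref{thm:univar} to the Extension of Arora's Corollary 6.2 by treating $g(\bbet\cdot\x)$ as a single power series in the dot product $\bbet\cdot\x$.

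First, since $g$ is analytic around $0$ with radius of convergence $R_g$, I can write $g(y)=\sum_{k=0}^{\infty}a_k y^k$ for $|y|<R_g$. When I restrict to $\|\x\|=1$, the Cauchy–Schwarz inequality gives $|\bbet\cdot\x|\le\bnorm<R_g$, so the series $g(\bbet\cdot\x)=\sum_{k=0}^{\infty}a_k(\bbet\cdot\x)^k$ converges absolutely and uniformly on the unit sphere. This puts $g(\bbet\cdot\x)$ exactly into the form $\sum_k a_k(\bbet_k^\top\x)^k$ with the convenient choice $\bbet_k=\bbet$, so $\|\bbet_k\|_2=\bnorm$ for every $k$.

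Next, I apply the Extension of Arora's Corollary 6.2 to this representation using our slowly decaying kernel $K$ whose coefficients satisfy $b_k^{-1/2}\le O(k)$ for $k\ge 1$ (and the $k=0$ term contributes $|a_0|=\tilde g(0)$ as noted in the paragraph after Equation~\ref{eq:ntk-bias}). This yields
\begin{equation}
\sqrt{M_g}\;=\;\sum_{k=0}^{\infty}b_k^{-1/2}|a_k|\,\bnorm^k\;\le\;\tilde g(0)+\sum_{k=1}^{\infty}k\,|a_k|\,\bnorm^k.
\end{equation}
Recognizing the series on the right as $\bnorm\,\tilde g'(\bnorm)$—since the auxiliary function $\tilde g(y)=\sum_k|a_k|y^k$ has the same radius of convergence $R_g$ as $g$ and may be differentiated term-by-term inside that radius—gives $\sqrt{M_g}\le\tilde g(0)+\bnorm\,\tilde g'(\bnorm)$. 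Plugging this $M_g$ into Theorem (\cite{arora_finegrained_2019}, 3.3) immediately produces the sample-complexity bound required by Definition~\ref{def:eff_learnable}.

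The main obstacle I expect is justifying that Arora's corollary, originally stated for finite-degree polynomials, extends to the infinite power-series case. I would handle this by a truncation argument: apply the corollary to the degree-$N$ partial sum $g_N(y)=\sum_{k=0}^{N}a_k y^k$, note that the resulting $M_{g_N}$ is monotone in $N$ and bounded above by $(\tilde g(0)+\bnorm\tilde g'(\bnorm))^2$ uniformly (using $\bnorm<R_g$ for absolute convergence of $\tilde g'$), and then pass $N\to\infty$ using the uniform convergence of $g_N(\bbet\cdot\x)$ to $g(\bbet\cdot\x)$ on the sphere together with the 1-Lipschitz property of the loss to absorb the tail into $\eps$. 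The other minor point to check is that the $k=0$ coefficient of the kernel is nonzero (so that the constant component $a_0$ is learnable with cost $|a_0|$, as noted in the discussion surrounding Equation~\ref{eq:ntk-bias}); this holds for the modified ReLU kernel constructed there.
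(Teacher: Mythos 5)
Your proposal follows essentially the same route as the paper: expand $g(\bbet\cdot\x)$ as a power series in $\bbet\cdot\x$, apply the bound $\sqrt{M_g}=\sum_k b_k^{-1/2}|a_k|\bnorm^k$ from the extension of Corollary 6.2 with $b_k^{-1/2}\le O(k)$, and recognize the result as $\bnorm\tilde g'(\bnorm)+\tilde g(0)$ using the fact that $\tilde g$ shares the radius of convergence $R_g$. Your truncation argument for passing from finite-degree polynomials to the full series is a detail the paper leaves implicit, but it does not change the approach.
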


\begin{proof}
We first note that the radius of convergence of the power series of $\tilde{g}(y)$ is also
$R_{g}$ since $g(y)$ is analytic. Applying Equation \ref{eq:arora_bound_relu}, pulling out
the $0$th order term, and factoring out $\bnorm$,
we get
\begin{equation}
\sqrt{M_{g}} = |a_{0}|+\bnorm\sum_{k=1}^{\infty}k|a_{k}|\bnorm^{k} = \bnorm\tilde{g}'(\bnorm)+\tilde{g}(0)
\end{equation}
since $\bnorm<R_{g}$.
\end{proof}

The relationship between $\tilde{g}(y)$ and the original $g(y)$ depends on the power series representation
of $g(y)$. For example, for $g(y) = 1/(1-y)$, the power series has all positive coefficients and $g(y) = \tilde{g}(y)$.
The worst case scenario is when the power series has alternating sign; for example, for $g(y) = e^{-y^2}$,
$\tilde{g}(y) = e^{y^2}$.

\subsubsection{Multivariate analytic functions}

\label{sec:multi_var}

The above class of efficiently learnable
functions is somewhat limiting; it is
``closed'' over addition (the sum of learnable functions is
learnable), but not over products and composition. The following lemma,
proved in Appendix \ref{sec:multivar_proof}, allows us to
generalize:

\begin{lemma}
\label{lem:multivar}
Given a collection of $p$ vectors $\bbet_{i}$ in $\mathbb{R}^d$,
the function $f(\x) = \prod_{i=1}^{p} \bbet_{i}\cdot \x$ is 
efficiently learnable with
\begin{equation}
\sqrt{M_{f}} = p\prod_{i}\bnorm_{i}
\end{equation}
where $\bnorm_{i}\equiv ||\bbet_{i}||_{2}$.
\end{lemma}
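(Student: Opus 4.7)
Since $f(\x) = \prod_{i=1}^{p} \bbet_i \cdot \x$ is a homogeneous polynomial of degree $p$ but is not of the form $a(\bbet \cdot \x)^p$ unless all $\bbet_i$ are parallel, Corollary~1 does not apply directly to $f$, and the naive route of expanding $f$ by polarization into a sum of $(\bbet_T \cdot \x)^p$ terms and applying the triangle inequality gives only a much weaker bound of order $(2e)^p \prod_i \bnorm_i$. Instead I would compute the RKHS norm $\|f\|_K$ directly and use that $\sqrt{M_f} \leq \|f\|_K$, which follows from the fact that $f$ is itself an interpolant of its own evaluations, so $\y^{\tpose} \H^{-1} \y \leq \|f\|_K^2$.

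Because $f$ is homogeneous of degree $p$ and $K = \sum_k b_k K_k$ with $K_k(\x,\x') = (\x\cdot\x')^k$, only the degree-$p$ component contributes to the norm, giving $\|f\|_K^2 = \|f\|_{K_p}^2 / b_p$. To evaluate $\|f\|_{K_p}^2$ I would start from the polarization identity
\[
\prod_{i=1}^p y_i \;=\; \frac{1}{p!}\sum_{T \subseteq [p]} (-1)^{p-|T|}\Big(\sum_{i \in T} y_i\Big)^p,
\]
rewriting $f(\x) = \frac{1}{p!}\sum_T (-1)^{p-|T|}(\bbet_T \cdot \x)^p$ with $\bbet_T := \sum_{i\in T}\bbet_i$, and then invoke the reproducing property $\langle g,(\v\cdot\x)^p\rangle_{K_p} = g(\v)$ valid for any homogeneous $g$ of degree $p$. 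Expanding $f(\bbet_T) = \prod_j \sum_{i\in T} \bbet_j \cdot \bbet_i$ as a sum over functions $\sigma:[p]\to T$ and swapping the order of summation, the inclusion-exclusion identity $\sum_{T\supseteq S}(-1)^{p-|T|} = 0$ for $S \neq [p]$ collapses everything to a sum over permutations, leaving the clean formula
\[
\|f\|_{K_p}^2 \;=\; \frac{1}{p!}\,\mathrm{perm}\,[\bbet_i \cdot \bbet_j]_{i,j=1}^{p}.
\]

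The rest is routine: each of the $p!$ terms in the permanent satisfies $|\prod_j \bbet_j \cdot \bbet_{\sigma(j)}| \leq \prod_j \bnorm_j \bnorm_{\sigma(j)} = \prod_j \bnorm_j^2$ by Cauchy--Schwarz and the fact that $\sigma$ is a bijection, so $\|f\|_{K_p}^2 \leq \prod_j \bnorm_j^2$. The assumption $b_p \geq p^{-2}$ then gives $\|f\|_K \leq p \prod_j \bnorm_j$, and hence $\sqrt{M_f} \leq p \prod_j \bnorm_j$. The chief obstacle is the permanent identity itself: a term-by-term triangle bound on the polarized expansion loses an exponential factor in $p$ and would destroy the downstream sample complexity for the gravitational force law; the saving grace is that the contributions from non-bijective $\sigma$ cancel exactly through inclusion-exclusion, and this cancellation is only visible if one works with the RKHS inner product rather than with the additive bound of Equation~\ref{eq:arora_bound_relu}.
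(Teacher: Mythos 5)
Your proof is correct, and it reaches the paper's bound by a genuinely different route. The paper stays in the finite-dimensional matrix setting of \cite{arora_finegrained_2019}: it writes $\H = \sum_k b_k \m{K}^{\circ k}$, represents $f(\X)$ as $(\X^{\odot p})^{\tpose}\otimes_{i=1}^{p}\bbet_i$ via Khatri--Rao/tensor products, and then applies the PSD inequality $\m{P}_{\m{B}}\m{A}^{-1}\m{P}_{\m{B}}\preceq\m{B}^{+}$ with $\m{B}=b_p\m{K}^{\circ p}$ to land on $\y^{\tpose}(\H)^{-1}\y\leq b_p^{-1}(\otimes_i\bbet_i)^{\tpose}\m{P}_{\X^{\odot p}}(\otimes_i\bbet_i)\leq b_p^{-1}\prod_i\bnorm_i^2$. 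You instead pass to the RKHS, use the minimum-norm-interpolant fact $\y^{\tpose}\H^{-1}\y\leq\|f\|_K^2$, and compute $\|f\|_{K_p}^2=\frac{1}{p!}\operatorname{perm}[\bbet_i\cdot\bbet_j]$ exactly via polarization and inclusion--exclusion; the Cauchy--Schwarz bound on the permanent is precisely the statement that the symmetrization of $\otimes_i\bbet_i$ has norm at most $\prod_i\bnorm_i$, so the two arguments are bounding the same quantity and your cancellation of non-bijective $\sigma$ is doing the work that the projection $\m{P}_{\X^{\odot p}}$ does in the paper. Your version buys an exact formula for the degree-$p$ RKHS norm (potentially sharper than $\prod_i\bnorm_i^2$ when the $\bbet_i$ are far from parallel) and avoids importing the pseudoinverse/projection lemma, at the cost of invoking RKHS machinery; the paper's version is more self-contained relative to \cite{arora_finegrained_2019}. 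Two small points of hygiene: your ``only the degree-$p$ component contributes'' should be an inequality $\|f\|_K^2\leq b_p^{-1}\|f\|_{K_p}^2$ rather than an equality, since on the sphere the decomposition of $f$ across the summands of $K$ is not unique (this is the direction you need anyway); and you should note, as the paper does implicitly, that the reduction from generalization error to $\y^{\tpose}\H^{-1}\y$ is supplied by Theorem (\cite{arora_finegrained_2019}, 3.3) together with Appendix \ref{sec:kernel_two_layer}.
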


Using this lemma we can prove:

\begin{subtheorem}
\label{thm:multivar}

Let $g(\x)$ be a function with multivariate power series representation:
\begin{equation}
g(\x) = \sum_{k} \sum_{v\in V_k} a_{v} \prod_{i=1}^{k} (\bbet_{v,i}\cdot\x)
\end{equation}
where the elements of $V_k$ index the $k$th order
terms of the power series. We define $\tilde{g}(y) = \sum_{k} \tilde{a}_{k} y^k$
with coefficients 
\begin{equation}
\tilde{a}_{k} = \sum_{v\in V_{k}} |a_{v}|\prod_{i=1}^{k}\bnorm_{v,i}
\end{equation}

If the power series of $\tilde{g}(y)$ converges at $y=1$ then with high probability
$g(\x)$ can be learned efficiently in the sense of Definition \ref{def:eff_learnable}
with $\sqrt{M_{g}} = \tilde{g}'(1)+\tilde{g}(0)$.
\end{subtheorem}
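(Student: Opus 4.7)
I would reduce Theorem~\ref{thm:multivar} to Lemma~\ref{lem:multivar} via a triangle-inequality argument on the quantity $\sqrt{M_g}$. The key observation is that $\y\mapsto\sqrt{\y^{\tpose}(\H)^{-1}\y}$ is a seminorm in the label vector: if $g = \sum_j c_j h_j$ is a (finite or convergent) decomposition into learnable functions, then on any training set the label vectors satisfy $\y_g = \sum_j c_j \y_{h_j}$, and the triangle inequality together with the definition of $M_{h_j}$ yields $\sqrt{\y_g^{\tpose}(\H)^{-1}\y_g} \leq \sum_j |c_j|\sqrt{M_{h_j}}$. Hence the sum $\sum_j |c_j|\sqrt{M_{h_j}}$ is a valid choice of $\sqrt{M_g}$ whenever it converges.

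\textbf{Main steps.} First, I would decompose $g$ along its multivariate power series as $g(\x) = a_0 + \sum_{k\geq 1}\sum_{v\in V_k} a_v f_v(\x)$, where $f_v(\x) = \prod_{i=1}^{k}(\bbet_{v,i}\cdot\x)$. Lemma~\ref{lem:multivar} gives $\sqrt{M_{f_v}} = k\prod_i\bnorm_{v,i}$ for each degree-$k$ monomial with $k\geq 1$, while the constant $a_0$ contributes $|a_0|=\tilde{g}(0)$ via the $k=0$ clause of Corollary~\ref{cor:arora}. Second, applying the triangle inequality and collecting terms,
\begin{align*}
\sqrt{M_g} &\leq \tilde{g}(0) + \sum_{k\geq 1}\sum_{v\in V_k}|a_v|\,k\prod_i\bnorm_{v,i}\\
&= \tilde{g}(0) + \sum_{k\geq 1} k\,\tilde{a}_k \;=\; \tilde{g}(0) + \tilde{g}'(1),
\end{align*}
by the definition of $\tilde{a}_k$ and the power-series identity $\tilde{g}'(y) = \sum_{k\geq 1}k\tilde{a}_k y^{k-1}$. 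Substituting into Theorem~\ref{thm:arora} then yields the claimed sample complexity.

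\textbf{Main obstacle.} The delicate point is extending the finite triangle inequality to a countably infinite decomposition: a priori one has an outer sum over orders $k$ and, for each $k$, an inner sum over $v\in V_k$. I would handle this by truncating at total degree $K$, applying the finite bound to the partial sum $g_K$ to obtain learnability of $g_K$, and then passing to the limit as $K\to\infty$, using the hypothesis that $\tilde{g}$ converges at $y=1$ to control both the tail of $\sqrt{M_g}$ and the approximation error $\|g - g_K\|_\infty$. One mildly subtle analytic point is that convergence of $\tilde{g}(1)$ does not by itself imply convergence of $\tilde{g}'(1)$; the theorem should be read as requiring the latter, and indeed in the applications of interest (e.g.\ Theorem~\ref{thm:gravity_bound}) $\tilde{g}$ has radius of convergence strictly greater than $1$, so $\tilde{g}'(1)$ is automatically finite.
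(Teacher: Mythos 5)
Your proposal is correct and follows essentially the same route as the paper: the paper's own (two-sentence) proof likewise decomposes $g$ into the monomials of its power series, applies Lemma~\ref{lem:multivar} term by term, sums the resulting bounds, and identifies the total with $\tilde{g}'(1)+\tilde{g}(0)$. Your write-up is in fact more careful than the paper's, making explicit the seminorm/triangle-inequality step that justifies summing the per-term bounds and correctly flagging that the hypothesis should really be convergence of $\tilde{g}'$ at $y=1$ rather than of $\tilde{g}$ alone.
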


\begin{proof}
Follow the construction in Theorem \ref{thm:univar}, using Lemma \ref{lem:multivar} to get bounds on the individual terms.
Then sum and evaluate the power series of $\tilde{g}'(1)$ to arrive at the bound.
\end{proof}

Since the set of efficiently
learnable functions is now appropriately ``closed''
over addition and multiplication, the standard machinery of calculus
can be used to prove learning bounds for combinations of functions
with known bounds. For example, we have:

\begin{corollary}[Product rule]
\label{cor:prod_rule}
Let $g(\x)$ and $h(\x)$ meet the conditions of Theorem 1. Then the product
$g(\x)h(\x)$ is efficiently learnable as well, with bound
\begin{equation}
\sqrt{M_{gh}} = \tilde{g}'(1)\tilde{h}(1)+\tilde{g}(1)\tilde{h}'(1)+\tilde{g}(0)\tilde{h}(0)
\end{equation}
\end{corollary}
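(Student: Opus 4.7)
The plan is to apply Theorem \ref{thm:multivar} to $gh$ after showing that the auxiliary function of the product equals the product of the auxiliary functions, i.e.\ $\widetilde{gh}(y) = \tilde{g}(y)\tilde{h}(y)$. The key algebraic fact is that multiplying two multivariate power series collects terms by total degree, and at the level of auxiliary (absolute-value) coefficients this collection is precisely the Cauchy product of the univariate series $\tilde{g}$ and $\tilde{h}$.

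Concretely, I would write $g(\x) = \sum_k\sum_{v\in V_k} a_v\prod_{i=1}^k(\bbet_{v,i}\cdot\x)$ and $h(\x) = \sum_\ell\sum_{w\in W_\ell} c_w\prod_{j=1}^\ell(\bbet'_{w,j}\cdot\x)$, and expand the product $g(\x)h(\x)$. Each pair $(v,w)$ with $v\in V_k$, $w\in W_\ell$ contributes a monomial of total degree $m=k+\ell$ built from $m$ linear factors with coefficient $a_v c_w$; grouping by $m$ presents $gh$ in the form required by Theorem \ref{thm:multivar}. The auxiliary coefficient of $gh$ at order $m$ is then
\begin{equation}
\sum_{k+\ell=m}\sum_{v\in V_k}\sum_{w\in W_\ell}|a_v|\,|c_w|\prod_{i}\bnorm_{v,i}\prod_{j}\bnorm'_{w,j}=\sum_{k+\ell=m}\tilde{a}_k\tilde{c}_\ell,
\end{equation}
which is exactly the $m$th Cauchy coefficient of $\tilde{g}\tilde{h}$, establishing $\widetilde{gh}(y)=\tilde{g}(y)\tilde{h}(y)$.

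Differentiating by the ordinary product rule gives $\widetilde{gh}'(y)=\tilde{g}'(y)\tilde{h}(y)+\tilde{g}(y)\tilde{h}'(y)$, and substituting $y=0$ and $y=1$ yields the claimed $\sqrt{M_{gh}} = \tilde{g}'(1)\tilde{h}(1)+\tilde{g}(1)\tilde{h}'(1)+\tilde{g}(0)\tilde{h}(0)$. The only loose end is convergence at $y=1$ so that Theorem \ref{thm:multivar} genuinely applies: since $\tilde{g}'(1)$ and $\tilde{h}'(1)$ are finite by hypothesis, both $\tilde{g}(1)$ and $\tilde{h}(1)$ are finite, and the product of two absolutely convergent nonnegative series converges absolutely, with the same argument handling the termwise-differentiated series. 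I do not anticipate any substantive obstacle; the proof is essentially bookkeeping around the Cauchy product identity.
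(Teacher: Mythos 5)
Your proposal is correct and is essentially the paper's own argument: the paper likewise expands the product series over pairs of terms, bounds each degree-$(j+k)$ term via Lemma \ref{lem:multivar}, distributes the factor $j+k$, and resums to obtain $\tilde{g}'(1)\tilde{h}(1)+\tilde{g}(1)\tilde{h}'(1)+\tilde{g}(0)\tilde{h}(0)$. Your packaging of this computation as the identity $\widetilde{gh}=\tilde{g}\tilde{h}$ (a Cauchy product of the auxiliary series) followed by an appeal to Theorem \ref{thm:multivar} is the same bookkeeping, stated slightly more cleanly, and your convergence check at $y=1$ is a valid tidying of a point the paper leaves implicit.
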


\begin{proof}
Consider the power series of $g(\x)h(\x)$, which exists and is convergent since each individual
series exists and is convergent. Let the elements of $V_{j,g}$ and $V_{k,h}$ index the $j$th order terms of $g$ and the $k$th order
terms of $h$ respectively.
The individual terms in the series look like:
\begin{equation}
a_{v}b_{w} \prod_{j'=1}^{j} (\bbet_{v,j'}\cdot\x)\prod_{k'=1}^{k} (\bbet_{w,k'}\cdot\x)~\text{for}~v\in V_{j,g},~w\in V_{k,h}
\end{equation}
with bound
\begin{equation}
(j+k)|a_{v}||b_{w}| \prod_{j'=1}^{j} \bnorm_{v,j'}\prod_{k'=1}^{k} \bnorm_{w,k'}~\text{for}~v\in V_{j,g},~w\in V_{k,h}
\end{equation}
for all terms with $j+k >0$ and $\tilde{g}(0)\tilde{h}(0)$ for the term with $j= k = 0$.

Distribute the $j+k$ product, and first focus on the $j$ term only. Summing over all the $V_{k,h}$ for all $k$, we get
\begin{equation}
\begin{split}
\sum_{k} \sum_{w\in V_{k,h}}j |a_{v}||b_{w}| \prod_{j'=1}^{j} \bnorm_{v,j'}\prod_{k'=1}^{k} \bnorm_{w,k'}& = \\ |a_{v}|\prod_{j'=1}^{j} \bnorm_{v,j'}\tilde{h}(1)
\end{split}
\end{equation}
Now summing over the $j$ and $V_{j,g}$ we get $\tilde{g}'(1)\tilde{h}(1)$. If we do the same for the $k$ term,
after summing we get $\tilde{g}(1)\tilde{h}'(1)$. These bounds add and we get the desired formula for
$\sqrt{M_{gh}}$, which, up to the additional $\tilde{g}(0)\tilde{h}(0)$ term looks is the product rule applied to $\tilde{g}$ and $\tilde{h}$.
\end{proof}

One immediate application for this corollary is the product of many
univariate analytic functions. If we define
\begin{equation}
G(\x) = \prod_{i} g_{i}(\bbet_{i}\cdot\x)
\end{equation}
where each of the corresponding $\tilde{g}_{i}(y)$ have the appropriate convergence properties,
then $G$ is efficiently learnable with bound $M_{G}$ given by
\begin{equation}
\sqrt{M_{G}} = \left.\frac{d}{dy} \prod_{i} \tilde{g}_{i}(\bnorm_{i} y)\right|_{y=1}+\prod_{i} \tilde{g}_{i}(0)
\end{equation}

We can also derive the equivalent of the chain rule for
function composition:

\begin{corollary}[Chain rule]
\label{cor:chain_rule}
Let $g(y)$ be an analytic function
and $h(\x)$ be efficiently learnable, with auxiliary functions $\tilde{g}(y)$ and
$\tilde{h}(y)$ respectively. Then the composition $g(h(\x))$ is efficiently learnable as well with bound
\begin{equation}
\sqrt{M_{g\circ h}} = \tilde{g}'(\tilde{h}(1))\tilde{h}'(1)+\tilde{g}(\tilde{h}(0))
\end{equation}
provided that $\tilde{g}(\tilde{h}(0))$ and $\tilde{g}(\tilde{h}(1))$ converge
(equivalently, if $\tilde{h}(0)$ and $\tilde{h}(1)$ are in the radius of convergence of $g$).
\end{corollary}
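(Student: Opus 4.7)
The plan is to reduce the composition to a sum of powers of $h$ and then invoke Theorem~\ref{thm:multivar} applied to the composed auxiliary function $\tilde g \circ \tilde h$. Concretely, expand $g(y) = \sum_k c_k y^k$ around $0$, so that
\[
g(h(\x)) = \sum_k c_k\, h(\x)^k.
\]
I then want to show that the auxiliary function of $g(h(\x))$, in the sense of Theorem~\ref{thm:multivar}, is exactly $\widetilde{g\circ h}(y) = \tilde g(\tilde h(y))$. Once that identity is in hand, the desired bound follows from Theorem~\ref{thm:multivar}:
\[
\sqrt{M_{g\circ h}} = \widetilde{g\circ h}'(1) + \widetilde{g\circ h}(0) = \tilde g'(\tilde h(1))\tilde h'(1) + \tilde g(\tilde h(0)),
\]
by the ordinary chain rule applied to the auxiliary functions.

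The main step is identifying the auxiliary function of a product, and then iterating. For two learnable functions, if one writes out the multivariate power series of $h_1 h_2$ using the expansions of $h_1$ and $h_2$, the coefficient of $y^n$ in the auxiliary function of $h_1 h_2$ is a convolution, giving $\widetilde{h_1 h_2}(y) = \tilde h_1(y)\tilde h_2(y)$. (Indeed, this is exactly the computation carried out in the proof of Corollary~\ref{cor:prod_rule}, repackaged as an identity about auxiliary functions rather than about $\sqrt{M}$.) By induction, $\widetilde{h^k}(y) = \tilde h(y)^k$. Summing against $|c_k|$ then gives
\[
\widetilde{g\circ h}(y) = \sum_k |c_k|\, \tilde h(y)^k = \tilde g(\tilde h(y)),
\]
which is the key identity.

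Finally I need to check the convergence hypotheses so that Theorem~\ref{thm:multivar} actually applies. The theorem requires that the auxiliary power series of $g\circ h$ converge at $y=1$ (and implicitly at $y=0$), i.e.\ that $\sum_k |c_k|\,\tilde h(1)^k < \infty$. Since $\tilde g$ has the same radius of convergence as $g$, this is precisely the assumption that $\tilde g(\tilde h(1))$ converges; similarly $\tilde g(\tilde h(0))$ handles the constant term. Monotonicity of $\tilde h$ on $[0,1]$ (its coefficients are nonnegative) ensures $\tilde h(0) \le \tilde h(1)$, so convergence at $1$ implies convergence of intermediate evaluations used implicitly.

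The mild obstacle is the bookkeeping in the first step: one must verify that the auxiliary-function identity $\widetilde{h_1 h_2} = \tilde h_1 \tilde h_2$ really does hold at the level of multivariate power series (not merely at the level of the scalar bound $\sqrt{M}$), and then carefully pass to an infinite sum over $k$ using absolute convergence guaranteed by the assumption $\tilde g(\tilde h(1)) < \infty$. Once that interchange of summation is justified, the bound follows by a single application of the chain rule in one variable to $\tilde g \circ \tilde h$.
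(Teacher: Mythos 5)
Your proof is correct and follows essentially the same route as the paper: expand $g(h(\x))=\sum_k a_k h(\x)^k$, use the iterated product rule to handle $h(\x)^k$, and sum. Your repackaging of the term-wise bounds as the single identity $\widetilde{g\circ h}=\tilde g\circ\tilde h$ followed by one application of Theorem~\ref{thm:multivar} is the same computation, just stated slightly more structurally.
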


\begin{proof}
Writing out $g(h(\x))$ as a power series in $h(\x)$, we have:
\begin{equation}
g(h(\x)) = \sum_{k=0}^{\infty} a_{k} (h(\x))^{k}
\end{equation}
We can bound each term individually, and use the $k$-wise product rule to bound each term of
$(h(\x))^{k}$. Doing this, we have:
\begin{equation}
\sqrt{M_{g\circ h}} = \sum_{k=1}^{\infty} k|a_{k}|\tilde{h}'(1)\tilde{h}(1)^{k-1}+\sum_{k=0}^{\infty}|a_{k}|\tilde{h}(0)^{k}
\end{equation}
Factoring out $\tilde{h}'(1)$ from the first term and then evaluating each of the series gets us the
desired result.
\end{proof}

The chain rule bound be generalized to a $2$-dimensional
outermost function $f(x,y)$, as proved in Appendix 
\ref{sec:multidim_anal_proof}:

\begin{repcorollary}{cor:multivar_anal} Let $f(x,y)$ be analytic, with $\tilde{f}(x,y)$ be the function obtained by taking the
multivariate power series of $f$ and replacing all coefficients with their absolute values. Then, if
$g(\x)$ and $h(\x)$ are both efficiently learnable, $f(g(\x),h(\x))$ is as well with bound
\begin{equation}
\sqrt{M_{f\circ (g,h)}} = \left.\frac{d}{dy}\tilde{f}(\tilde{g}(y),\tilde{h}(y))\right|_{y=1}
\end{equation}
provided $\tilde{f}(\tilde{g}(y),\tilde{h}(y))$ converges at $y=1$.
\end{repcorollary}

\subsection{Learning dynamical systems}

\label{sec:learning_dynamical}

We can use the product and chain rules to show that many functions
important in scientific applications can be efficiently learnable.
This is true even when the function has a singularity. As an example
demonstrating both, we prove the following bound on learning Newton's law
of gravitation:

\begin{theorem}
\label{thm:gravity_bound}
Consider a system of $k$ bodies with positions $\x_{i}\in \mathbb{R}^{3}$
and masses $m_{i}$, interacting via the force:
\begin{equation}
\F_{i} = \sum_{j\neq i} \frac{m_{i}m_{j}}{r_{ij}^3}(\x_{j}-\x_{i})
\end{equation}
where $r_{ij} \equiv ||\x_{i}-\x_{j}||$.
We assume that $\rrat = r_{max}/r_{min}$, the ratio between the largest
and smallest pairwise distance between any two bodies, is constant. 
Suppose the $m_i$ have been rescaled
to be between $0$ and $1$.
Then the force law is efficiently
learnable in the sense of Definition \ref{def:eff_learnable} using the modified ReLU kernel
to generalization error less than $\eps$
using $k^{O(\ln(k/\eps))}$ samples.
\end{theorem}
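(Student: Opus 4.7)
The plan is to prove Theorem~\ref{thm:gravity_bound} by combining the calculus of bounds from Section~\ref{sec:learning_analytic} with an explicit polynomial truncation that handles the $r_{ij}\to 0$ singularity. I first encode a configuration as a single unit-norm vector $\X$ by stacking the rescaled positions $\x_i/r_{max}$ and masses $m_i$ together with an auxiliary normalization coordinate, so that each $(\x_i)_a/r_{max}$, each $m_i$, and each coordinate difference $(x_{j,a}-x_{i,a})/r_{max}$ is expressible as $\bbet\cdot\X$ with $\|\bbet\|_2$ polynomially bounded in $k$. I then bound each of the $3k$ scalar output components of the total force separately and sum the resulting $\sqrt{M}$ values.

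The nontrivial piece is $r_{ij}^{-3}=r_{max}^{-3}\,s_{ij}^{-3/2}$ with $s_{ij}=r_{ij}^2/r_{max}^2\in[\rrat^{-2},1]$. I Taylor-expand $y\mapsto y^{-3/2}$ around the midpoint $y_0=(1+\rrat^{-2})/2$ and truncate at degree $N$. The truncation error on the feasible interval is bounded by $C_\rrat\rho^N$ with $\rho=(\rrat^2-1)/(\rrat^2+1)<1$ depending only on $\rrat$; choosing $N=\Theta(\ln(k^2/\eps))$ forces a per-pair uniform error $O(\eps/k^2)$ and hence total truncation error at most $\eps/2$ on each force coordinate. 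Substituting the quadratic polynomial $s_{ij}(\X)=\sum_{a=1}^3(\bbet_a\cdot\X)^2$ into the degree-$N$ approximant $P_N$ and multiplying by $m_i$, $m_j$, and $(x_{j,a}-x_{i,a})/r_{max}$ via Corollary~\ref{cor:prod_rule} produces each summand of the approximating force $\F_i^{(N)}$ as an explicit multivariate polynomial in $\X$ of degree $2N+3$.

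Applying Theorem~\ref{thm:multivar} through Lemma~\ref{lem:multivar} to this polynomial yields a per-summand bound of the form $N\cdot(Ck)^N$ for a constant $C=C(\rrat)$: the multinomial expansion of $(s_{ij})^m$ contributes $3^m$ monomials of degree $2m$ with coefficient magnitudes $\binom{3/2+m-1}{m}y_0^{-3/2-m}=O(C^m)$, and each such monomial contributes $2m\cdot\prod_i\bnorm_i$ by Lemma~\ref{lem:multivar}. Summing over the $O(k)$ neighbors of body $i$ and the $3k$ output coordinates gives $\sqrt{M_{\F^{(N)}}}\leq k^{O(N)}$, so by Definition~\ref{def:eff_learnable} the sample complexity is $k^{O(N)}/\eps^2=k^{O(\ln(k/\eps))}$ (the trailing $1/\eps^2$ is absorbed since $\ln(1/\eps)$ already appears in the exponent).

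The main obstacle is the $r_{ij}\to 0$ singularity: the chain rule of Corollary~\ref{cor:chain_rule} cannot be applied directly to the composition $(\cdot)^{-3/2}\circ s_{ij}$, because the auxiliary function $(y_0-z)^{-3/2}$ arising from any Taylor expansion of $y^{-3/2}$ around a positive $y_0$ saturates its radius of convergence precisely at the constant term $\tilde{s}_{ij}(0)$ of the inner polynomial. This reflects the fact that the calculus-of-bounds machinery replaces alternating-sign coefficients by their absolute values and therefore does not natively exploit positivity of $s_{ij}$. Sidestepping this obstruction with an explicit polynomial truncation at degree $\Theta(\ln(k/\eps))$---in which the $\rrat$-separation assumption is used essentially to keep $\rho$ bounded strictly below $1$---and then transferring the polynomial bound from $\F^{(N)}$ to the true $\F$ via a triangle inequality in the NTK RKHS (using $\|\F-\F^{(N)}\|_\infty\leq\eps/2$ together with the spectral gap of $\H$ assumed in the Arora bound) constitute the technical core of the proof.
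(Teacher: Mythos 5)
Your proposal follows essentially the same route as the paper: truncate the Taylor expansion of $y^{-3/2}$ about the midpoint of the feasible interval of $r_{ij}^2$ (your $y_0=(1+\rrat^{-2})/2$ is exactly the paper's choice $a^2=(r_{min}^2+r_{max}^2)/2$ in normalized coordinates), pick the truncation degree $\Theta(\ln(k^2/\eps))$ so the per-pair error is $O(\eps/k^2)$, apply Lemma~\ref{lem:multivar} and the product/chain rules to the resulting polynomial to get $\sqrt{M}=k^{O(\ln(k/\eps))}$, and finish with a triangle inequality and a union bound over the $3k$ components. Your explicit identification of why Corollary~\ref{cor:chain_rule} fails directly (the auxiliary function saturating its radius of convergence at $\tilde{s}_{ij}(0)$) is a correct articulation of the obstruction the paper's truncation is designed to avoid.
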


\begin{proof}
We will prove learning bounds for each component of $F$ separately,
showing efficient learning with probability greater than $1-\delta/3k$. Then, using the union bound,
the probability of simultaneously learning all the components efficiently will be $1-\delta$.

There are two levels of approximation: first, we will construct a function which is within $\epsilon/2$
of the original force law, but more learnable. Secondly, we will prove bounds on learning that function
to within error $\epsilon/2$.

We first rescale the vector of collective $\{\x_{i}\}$ so that their collective length is at most $1$.
In these new units, this gives us $r_{max}^2\leq \frac{2}{k}$.
The first component
of the force on $\x_{1}$ can be written as:
\begin{equation}
(\F_{1})_{1} = \sum_{j=2}^{k} \frac{m_{1}m_{j}}{r_{1j}^{2}}\frac{((\x_{j})_{1}-(\x_{1})_{1})}{r_{1j}}
\end{equation} 
If we find a bound $\sqrt{M_{f}}$ for an individual contribution $f$
to the force, we can get a bound on the total
$\sqrt{M_{F}} = (k-1)\sqrt{M_{f}}$. Consider an individual force term
in the sum. The force has a singularity at
$r_{1j} = 0$. In addition, the function $r_{1j}$ itself is non-analytic due to the branch cut at $0$.

We instead will approximate the force law with a finite power series in $r_{1j}^2$, and get bounds on learning said
power series. The power series representation of
$(1-x)^{-3/2}$ is $\sum_{n=0}^{\infty} \frac{(2n+1)!!}{(2n)!!}x^{n}$.
If we approximate the function with $d$ terms, the error can be 
bounded using Taylor's theorem. The Lagrange form
of the error gives us the bound
\begin{equation}
\left|\frac{1}{(1-x)^{3/2}}-\sum_{n=0}^{d} \frac{(2n+1)!!}{ (2n)!!}x^{n}\right| \leq \frac{\sqrt{\pi d} |x|^{d+1}}{(1-|x|)^{5/2+d}}
\end{equation}
where we use $\frac{(2n+1)!!}{(2n)!!}\approx \sqrt{\pi n}$
for large $n$.
We can use the above expansion by
rewriting
\begin{equation}
r_{1j}^{-3} = a^{-3}(1-(1-r_{1j}^{2}/a^{2}))^{-3/2}
\end{equation}
for some shift $a$.
Approximation with $f_{d}(r_{1j}^2)$, the first $d$ terms of the power series in $(1-r_{1j}^{2}/a^{2})$ gives us the error:
\begin{equation}
|f_{d}(r_{1j}^2)-r_{1j}^{-3}|\leq \frac{\sqrt{\pi d}|1-r_{1j}^{2}/a^{2}|^{d+1}}{a^3(1-|1-r_{1j}^{2}/a^{2}|)^{5/2+d}}
\end{equation}
which we want to be small over the range $r_{min}\leq r_{1j}\leq r_{max}$.

The bound is optimized when it takes the same value at
$r_{min}$ and $r_{max}$, so we set
$a^{2} = (r_{min}^2+r_{max}^2)/2$.
In the limit that $r_{max}\gg r_{min}$, where learning is most difficult,
the bound becomes
\begin{equation}
|f_{d}(r_{1j}^2)-r_{1j}^{-3}|\leq 
\frac{\sqrt{8\pi d}}{r_{max}^3}
\left(\rrat^2/2\right)^{5/2+d}e^{-2(d+1)/\rrat^2}
\end{equation}
where $\rrat = r_{max}/r_{min}$, which is constant
by assumption.

In order to estimate an individual contribution to the force force to error $\epsilon/2k$ (so the total error
is $\epsilon/2$), we must have:
\begin{equation}
m_1 m_j r_{max}  |f_{d}(r_{1j})-r_{1j}^{-3}|\leq \frac{\eps}{2k}
\end{equation}
This allows us to choose the smallest $d$ which gives us this error. Taking
the logarithm of both sides, we have:
\begin{equation}
\frac{1}{2}\ln(d)-(5/2+d)\ln\left(2/\rrat^2\right)-2(d+1)/\rrat^2\leq \ln(\eps/k^2)
\end{equation}
where we use that
$r_{max}^2\leq 2/k$ after rescaling.
The choice $d\geq \rrat^2\ln(k^2/\eps)$ ensures error less than $\eps/2k$ per term.

Using this approximation, we can use the product and chain rules to 
get learning bounds on
the force law. We can write the approximation
\begin{equation}
F_{\eps}(\x) = \sum_{j\neq 1} m_1m_j f_{d}(h_{j}(\x)) k_{j}(\x)
\end{equation}
where $h_{j}(\x) = ||\x_{1}-\x_{j}||$
and $k_{j}(\x) = (\x_{1})_1-(\x_{j})_{j}$
The number of samples needed for efficient learning is bounded by
$\sqrt{M_{F_{\eps}}} = \frac{\sqrt{8}k}{r_{max}^3} A_{F_{\eps}}$, for
\begin{equation}
A_{F_{\eps}} =
\tilde{f}_{d}'(\tilde{h}(1))\tilde{h}'(1) \tilde{k}(1) +\tilde{f}_{d}(\tilde{h}(1)) \tilde{k}'(1)
\end{equation}
with
\begin{equation}
\tilde{k}(y) = \sqrt{2}y,~\tilde{h}(y) = 6y^2,~\tilde{f}_{d}(y) = \sqrt{\pi d}(1+y/a^2)^{d}
\end{equation}
Evaluating, we have
\begin{equation}
A_{F_{\eps}}   = 
 \sqrt{2\pi d}\left(1+\frac{12}{r_{max}^2}\right)^{d} +\sqrt{\pi d^3}\left(1+\frac{12}{r_{max}^2}\right)^{d-1}
\end{equation}
which, after using $r_{max}^2\leq 2/k$ and
$d = \rrat^2\ln(k^2/\eps)$
gives us the bound
\begin{equation}
\sqrt{M_{F_{\eps}}} \leq  k^{-1/2}  \left(\rrat^2\ln(k^2/\eps)\right)^{3/2}\left(24 k\right)^{R^2\ln(k^2/\eps)}
\label{eq:gravity_bound}
\end{equation}
The asymptotic behavior is
\begin{equation}
\sqrt{M_{F_{\eps}}} = k^{O(\ln(k/\eps))}
\end{equation}
since $\rrat$ is bounded.

We can therefore
learn an $\epsilon/2$-approximation of one component of $\F_{1}$, with probability
at least $1-\delta/3k$ and error $\epsilon/2$ with  $O(4(M_{F_{\eps}}+\log(3k/\delta))/\epsilon^2)$
samples. Therefore, we can learn $\F_{1}$ to error $\epsilon$ with the same number
of samples. Using a union bound, with probability at least $1-\delta$ we can simultaneously
learn all components of all $\{\F_{i}\}$ with that number of samples.
\end{proof}

We note that since the cutoff of the power series at $d(\eps) = O(\rrat^2\ln(k^2/\eps))$ dominates
the bound, we can easily compute learning bounds for other power-series kernels as well.
If the $d$th power series coefficient of the kernel is $b_d$, then 
the bound on $\sqrt{M_{F_{\eps}}}$
is increased by $(d(\eps)^2b_{d(\eps)})^{-1/2}$.
For example, for the Gaussian kernel, since $b_d^{-1/2} = \sqrt{d!}$, the bound becomes
\begin{equation}
\sqrt{M_{F_{\eps}}} = (\rrat^2\ln(k^2/\eps)k)^{O(\ln(k/\eps))}
\end{equation}
which increases the exponent of $k$ by a
factor of $\ln(\rrat^2\ln(k^2/\eps))$.

\section{Experiments}

\label{sec:expts}

We empirically validated our analytical learning bounds by
training models to
learn the gravitational force function for $k$ bodies (with $k$ 
ranging from $5$ to $400$) in a $3-$dimensional space.
We created synthetic datasets by randomly drawing $k$ points from
$[0, 1]^3$ corresponding to the location of $k$ bodies, and compute 
the gravitational force (according to Equation \ref{eq:gravity}) on a
target body also drawn randomly from $[0, 1]^3$. To avoid 
singularities, we ensured a minimum distance of $0.1$ between the 
target body and the other bodies
(corresponding to the choice $\rrat = 10$). 
As predicted by the theory, none of the models learn well
if $\rrat$ is not fixed.
We randomly drew the 
masses 
corresponding to the $k+1$ bodies from $[0,10]$. We generated $5$ 
million such examples - each example with $4(k+1)$ features 
corresponding to the location and mass of each of the bodies, and a 
single label corresponding to the gravitational force $F$ on the target 
body along the $x$-axis. We held out 
$10\%$ of the dataset as test data to compute the root mean square 
error (RMSE) in prediction.
We trained three different neural networks 
on this data, corresponding to various kernels we analyzed in the 
previous section:
\begin{enumerate}
\item A wide one hidden-layer ReLU network (corresponding
to the ReLU NTK kernel).
\item A wide one hidden-layer ReLU network with a 
constant bias feature added to the input (corresponding to the NTK 
kernel in Equation \ref{eq:ntk-bias}).
\item A wide one hidden-layer network with exponential 
activation function, where only the top layer of the network is
trained (corresponding to the Gaussian kernel).
\end{enumerate}

We used a hidden layer of width $1000$ for all the
networks, as we observed that 
increasing the network width further did not improve results 
significantly.
All the hidden layer weights were initialized randomly.

In Figure \ref{fig:expts} we show the the normalized RMSE
(RMSE/[$F_{max}-F_{min}$]) for each of the neural networks for 
different values of the number of bodies $k$.


\begin{figure}[h!]
\centering
\includegraphics[width=\linewidth]{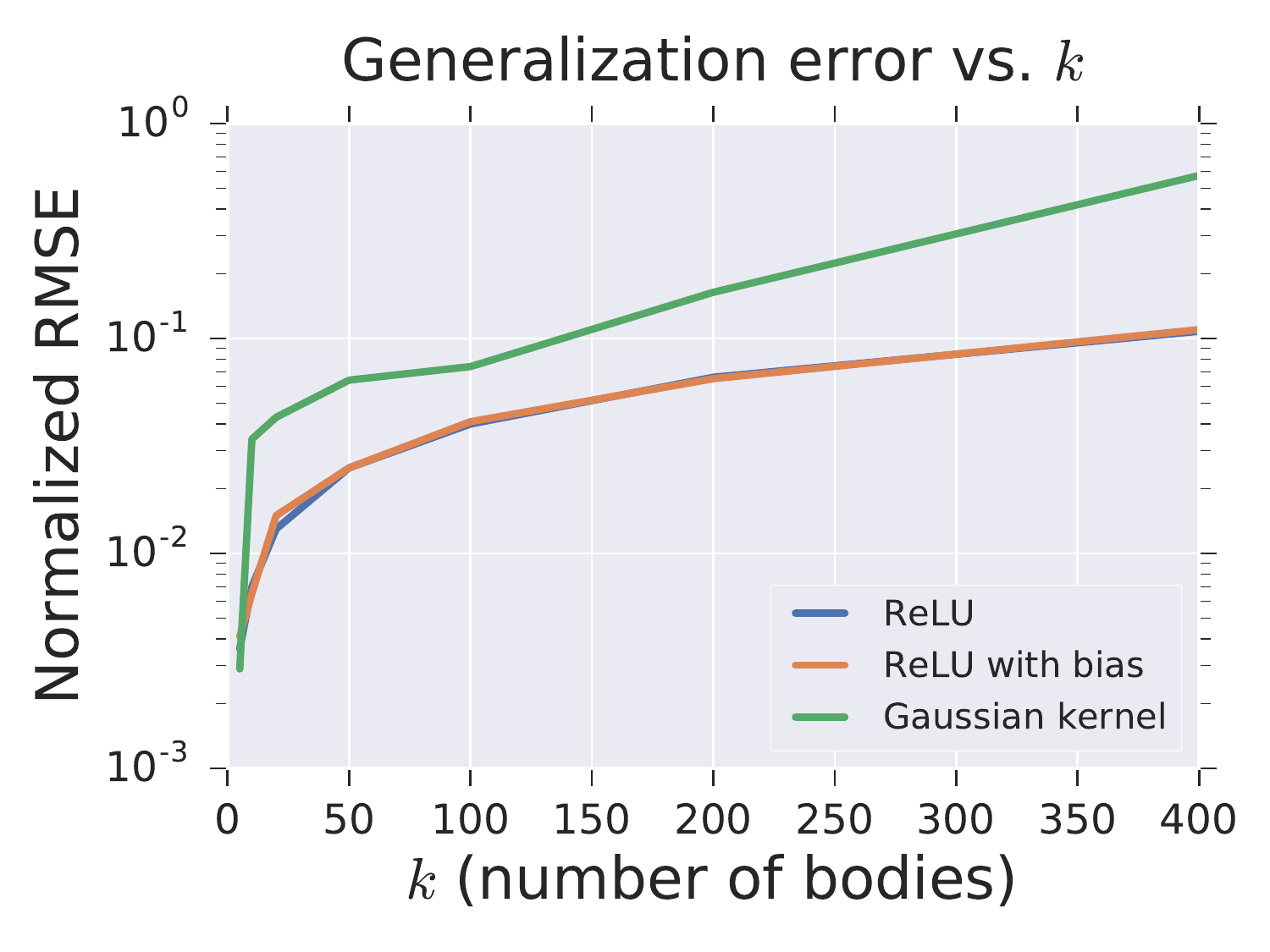}
\caption{RMSE vs number of bodies $k$ for learning gravitational
force law for different kernels. Normalized by the range $F_{max}-F_{min}$
of the forces. Gaussian kernels learn worse than
ReLU at large $k$.
}
\label{fig:expts}
\end{figure}

All three networks are able to learn the 
gravitational force equation with small normalized RMSE for hundreds of bodies.  
Both the ReLU network and ReLU with bias outperform the network corresponding to the Gaussian
kernel (in terms of RMSE) as $k$ increases.
In particular, the Gaussian kernel learning seems to quickly degrade at around $400$ bodies,
with a normalized RMSE exceeding $50\%$. This is consistent with the learning bounds
for these kernels in Section \ref{sec:kernel_learning}, and suggests
that those bounds may in fact be useful to compare the performances
of different networks in practice.

We did not, however, observe much difference in the performance of the
ReLU network when adding a bias  to the input, which suggests that 
the inability to get an analytical bound due
to only even powers
in the ReLU NTK kernel might be a shortcoming of the proof
technique, rather than a property which fundamentally limits the
model.

\section{Conclusions}

Our theoretical work shows that the broad and important class of 
analytic functions is provably
learnable with the right kernels (or the equivalent wide networks). 
The methods which we developed may
be useful for proving learnability of other classes of functions, 
such as the flows induced by
finite-time integration of differential equations. Furthermore, in general, there is an open question as to whether these generalization bounds can be substantially improved.

Our experiments suggest that these
bounds may be useful for distinguishing which types of models are 
suited for specific problems.
Further experimental and theoretical work is necessary to ascertain whether this holds for the finite-width
networks used in practice, or when common hyperparameter 
tuning/regularization strategies are used
during training, such as ARD in kernel learning.

\bibliography{poly_learning_refs_aa}
\bibliographystyle{icml2020}

\clearpage

\appendix

\section{Kernels and two layer networks}

\label{sec:kernel_two_layer}

Previous work focused on generalization bounds for training the hidden layers of wide networks with SGD.
Here we show that these bounds also apply to the case where only the final layer weights
are trained (corresponding to the NNGP kernel in \cite{lee_wide_2019}). The proof strategy
consists of showing that finite-width networks have a sensible infinite-width limit, and showing that
training causes only a small change in parameters of the network.

Let $m$ be the number of hidden units, and $n$ be the number of data points.
Let $\h$ be a $n \times m$ random matrix denoting the activations of the hidden layer (as a function of the weights of the 
lower layer) for all $n$ data points. 
Similarly to
\cite{arora_finegrained_2019, du_gradient_2019} we will argue that for large enough $m$ even if we take a random input 
layer and just train the upper layer weights then the generalization error is at most 
$\sqrt{\frac{\y^{\tpose}(\H)^{-1}\y}{n}}$. For our purposes, we define:
\begin{equation}
 \H = \expect[\h\h^{\tpose}]   
\end{equation}
which is the NNGP kernel from \cite{lee_wide_2019}.

If $K(\x,\x')$, the kernel function which generates $\H$
is given by a infinite Taylor series in $\x\cdot\x'$ it can be argued that 
$\H$ has full rank for most real world distributions. For example, the ReLU activation this holds as long as no two data 
points are co-linear (see Definition 5.1 in \cite{arora_finegrained_2019}). We can prove this more explicitly in the general case.

\begin{lemma}
If all the $n$ data points $x$ are distinct and the Taylor series of  $K(\x,\x')$ in $\x\cdot\x'$ has positive coefficients everywhere then $\H$ is not singular.
\end{lemma}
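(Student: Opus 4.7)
The plan is to prove the contrapositive: assume $\H$ is singular and derive a contradiction with the distinctness of the data points. If $\H$ is singular, there is a nonzero vector $c \in \mathbb{R}^n$ with $c^{\tpose}\H c = 0$. Expanding the kernel as its Taylor series gives
\begin{equation}
c^{\tpose}\H c = \sum_{k=0}^{\infty} b_k \sum_{i,j=1}^{n} c_i c_j (\x_i\cdot\x_j)^k,
\end{equation}
so the first step is to interchange the sum over $k$ with the finite double sum over $i,j$, which is justified provided the Taylor series converges on the range of values $\x_i\cdot\x_j$ (true here since the $\x_i$ lie in a bounded region on which $K$ is defined).

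The key observation is that the $k$-th term is nonnegative: the polynomial kernel $(\x\cdot\x')^k$ has feature map $\x \mapsto \x^{\otimes k}$, so
\begin{equation}
\sum_{i,j=1}^n c_i c_j (\x_i\cdot\x_j)^k = \bigl\|{\textstyle\sum_i c_i \x_i^{\otimes k}}\bigr\|^2 \geq 0.
\end{equation}
Since every coefficient $b_k$ is strictly positive, the full sum can vanish only if each term vanishes individually, i.e.\ $\sum_i c_i \x_i^{\otimes k} = 0$ for every $k \geq 0$. By contracting against arbitrary rank-$k$ tensors (equivalently, by linearity across monomials), this forces $\sum_i c_i p(\x_i) = 0$ for every polynomial $p$ in the coordinates of $\x$.

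The last step is the interpolation argument that delivers the contradiction. Because the $\x_i$ are distinct points in $\mathbb{R}^d$, for each index $i_0$ we can construct (e.g.\ by Lagrange-style interpolation, taking products of linear functionals that separate $\x_{i_0}$ from each other $\x_i$) a polynomial $p_{i_0}$ with $p_{i_0}(\x_{i_0})=1$ and $p_{i_0}(\x_i)=0$ for $i\neq i_0$. Substituting $p_{i_0}$ into the identity above yields $c_{i_0}=0$, and doing this for each $i_0$ forces $c=0$, contradicting our choice of $c$. I expect the main obstacle, such as it is, to be formally justifying the termwise interchange and the move from ``all tensor power sums vanish'' to ``all polynomial evaluations vanish''; both are routine but should be stated carefully so that the proof works for any bounded input distribution for which the Taylor expansion of $K$ is valid.
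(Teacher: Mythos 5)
Your proof is correct, and it is tighter than the paper's own argument while resting on the same underlying idea (positive Taylor coefficients mean the kernel's feature map contains every polynomial feature, and polynomials separate distinct points). The paper proceeds by exhibiting the feature map $\Phi(x)=(1,x,x^2,\dots)$ in the scalar case and invoking the non-singularity of the Vandermonde matrix, then handles multidimensional inputs only informally: it argues that a generic projection keeps the points distinct and concludes that $\H$ ``can be considered to be invertible in general.'' You instead stay in $\mathbb{R}^d$ throughout: you decompose $c^{\tpose}\H c$ termwise, identify the $k$-th term as the squared norm of $\sum_i c_i \x_i^{\otimes k}$ so that strict positivity of every $b_k$ forces each term to vanish, and then kill $c$ coordinate by coordinate with multivariate Lagrange-style interpolants built from separating linear functionals. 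This buys you a fully rigorous multivariate statement with no appeal to genericity or random projections, at the modest cost of having to justify the interchange of summation (which, as you note, follows from absolute convergence of the series on the bounded range of $\x_i\cdot\x_j$, e.g.\ for inputs on the unit sphere where convergence of $K(\x_i,\x_i)$ dominates). The paper's projection-plus-Vandermonde route is shorter to state but, as written, leaves the reduction from the inner-product kernel $(\x_i\cdot\x_j)^k$ to the projected scalar kernel unjustified; your argument closes exactly that gap.
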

\begin{proof}
First consider the case where the input $x$ is a scalar.
Since the Taylor series 
corresponding to $K(x,x')$ consists of monomials of all degrees of
$xx'$, we can view it as some inner product in a kernel space induced by
the function $\Phi(x) = (1,x,x^2,\ldots)$, where the inner product is diagonal
(but with potentially different weights) in this basis.
For any distinct set of inputs $\{x_1, .., x_n\}$ the set of vectors $\Phi(x_i)$ are linearly independent.
The first $n$ columns produce the Vandermonde matrix obtained by stacking rows
$1,x,x,...,x^{n-1}$ for $n$ different values of $x$, which is well known to be non-singular
(since a zero eigenvector would correspond to a degree $n-1$ polynomial with $n$ distinct roots
$\{x_1, .., x_n\}$).

This extends to the case of multidimensional $\x$ if the values, projected
along some dimension, are distinct.
In this case, the kernel space corresponds to the direct sum of copies of $\Phi$ applied
elementwise to each coordinate $\x_{i}$. If all the points are distinct and
and far apart from each other, the probability that a given pair coincides under random projection
is negligible. From a union bound, the probability that a given pair coincide is also bounded --
so there must be directions such that projections along that direction are distinct.
Therefore, $\H$ can be considered to be invertible in general.
\end{proof}

As $m \rightarrow \infty$, $\h\h^{\tpose}$ 
concentrates to its expected value. More precisely,
$(\h\h^{\tpose})^{-1}$
approaches $(\H)^{-1}$ for large $m$ if we assume that the smallest 
eigenvalue $\lambda_{min}(\H) \geq \lambda_0$, which
from the above lemma we know
to be true for fixed $n$.
(For the ReLU NTK kernel the difference becomes negligible with high probability for
$m = poly(n/\lambda_0)$ \cite{arora_finegrained_2019}.)
This allows us to replace $\h\h^{\tpose}$ with $\H$ in any bounds involving the former.

We can get learning bounds in terms of $\h\h^{\tpose}$ in the following manner.
The output of the network is given by $\y = \w\cdot\h$, where $\w$ is the vector of upper layer weights
and $\y$ is $1\times n$ vector of
training output values.
The outputs are linear in $\w$.
Training only the $\w$, and assuming $\h\h^{\tpose}$ is invertible
(which the above arguments show is true with high probability
for large $m$), the following lemma holds:
\begin{lemma} If we initialize a random lower layer and train the weights of the upper layer,
then there exists a solution $\w$
with norm $\y^{\tpose} (\h\h^{\tpose})^{-1} \y$.
\end{lemma}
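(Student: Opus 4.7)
The plan is to reduce this to the standard minimum-norm interpolation problem for an underdetermined linear system. Once only the upper layer is trained, the model is linear in $\w$, namely $\y = \h\w$ where $\h\in\mathbb{R}^{n\times m}$ is the fixed (random) hidden-layer activation matrix and $\w\in\mathbb{R}^m$ is the trainable vector of upper weights. By the previous lemma, $\h\h^{\tpose}$ is almost surely invertible (for $m$ large enough that the empirical Gram matrix concentrates near $\H$, whose smallest eigenvalue is bounded below by $\lambda_0$), so the system $\h\w=\y$ is consistent.

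I would then exhibit an explicit interpolant. Take $\w^\star = \h^{\tpose}(\h\h^{\tpose})^{-1}\y$. The feasibility check is immediate: $\h\w^\star = (\h\h^{\tpose})(\h\h^{\tpose})^{-1}\y = \y$. Its squared Euclidean norm is
\begin{equation}
\|\w^\star\|^2 = (\w^\star)^{\tpose}\w^\star = \y^{\tpose}(\h\h^{\tpose})^{-1}\h\h^{\tpose}(\h\h^{\tpose})^{-1}\y = \y^{\tpose}(\h\h^{\tpose})^{-1}\y,
\end{equation}
which is exactly the quantity in the statement (read as a squared norm, matching the way it appears under the square root in Theorem 3.3).

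For completeness I would also observe that $\w^\star$ is in fact the minimum-norm interpolant, because $\w^\star$ lies in the row space of $\h$, while any other feasible $\w=\w^\star+\v$ must have $\v\in\ker(\h)$, so $\w^\star\perp\v$ and $\|\w\|^2=\|\w^\star\|^2+\|\v\|^2\geq\|\w^\star\|^2$. This is the standard projection argument and does not really constitute an obstacle; it just certifies that the interpolating solution whose existence we asserted is the smallest one, which is the value that enters the Rademacher-complexity bound and hence the generalization bound of Theorem 3.3.

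The only genuine subtlety is whether the training dynamics actually produce such a $\w$. For the lemma as stated, we only need existence, so SGD/gradient flow on a convex quadratic loss with zero initialization suffices, since for least squares on a linear model the iterates stay in $\text{range}(\h^{\tpose})$ and converge to the minimum-norm interpolant; no further work is required beyond citing standard convex optimization.
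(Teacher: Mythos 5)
Your proof is correct, and it reaches the same destination as the paper's --- both arguments exhibit the minimum-norm interpolant and compute its squared norm to be $\y^{\tpose}(\h\h^{\tpose})^{-1}\y$ --- but your algebraic route is slightly different and, in fact, cleaner. The paper writes the minimizer as $\w^{*}=(\h^{\tpose}\h)^{-1}\h^{\tpose}\y$ and then must prove the identity $\h(\h^{\tpose}\h)^{-2}\h^{\tpose}=(\h\h^{\tpose})^{-1}$ via an SVD computation; strictly speaking, in the overparameterized regime $m>n$ the matrix $\h^{\tpose}\h$ is singular, so that expression only makes sense with a pseudoinverse, and the SVD step is where the rectangularity gets absorbed. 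You instead write the same vector in the form $\w^{\star}=\h^{\tpose}(\h\h^{\tpose})^{-1}\y$, which only ever inverts the $n\times n$ Gram matrix that the previous lemma guarantees is nonsingular; feasibility and the norm computation then each take one line, and no SVD identity is needed. Your closing observations --- the orthogonal-decomposition argument showing $\w^{\star}$ is the minimum-norm interpolant, and the remark that gradient descent from zero initialization stays in $\mathrm{range}(\h^{\tpose})$ and hence converges to it --- go slightly beyond what the lemma asserts (it only claims existence), but they match the surrounding discussion in the appendix, where the paper likewise argues that SGD on the convex problem reaches this solution before invoking the Rademacher bound.
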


\begin{proof}
The minimum norm solution to $\y = \w^{\tpose}\h$ is
\begin{equation}
\w^* = (\h^{\tpose}\h)^{-1}\h^{\tpose}\y
\end{equation}
The norm $(\w^*)^{\tpose}\w^{*}$ of this solution is
given by $\y^{\tpose}\h(\h^{\tpose}\h)^{-2}\h^{\tpose}\y$.

We claim that $\h(\h^{\tpose}\h)^{-2}\h^{\tpose} = (\h\h^{\tpose})^{-1}$.
To show this, consider the SVD decomposition $\h = \m{U}\m{S}\m{V}^{\tpose}$.
Expanding we have
\begin{equation}
\h(\h^{\tpose}\h)^{-2}\h^{\tpose} = \m{U}\m{S}\m{V}^{\tpose} (\m{V}\m{S}^{2} \m{V}^{\tpose})^{-2} \m{V}\m{S}\m{U}^{\tpose}
\end{equation}
Evaluating the right hand side gets us $\m{U}\m{S}^{-2}\m{U}^{\tpose} = (\h\h^{\tpose})^{-1}$.

Therefore, the norm of the minimum norm solution is $\y^{\tpose} (\h\h^{\tpose})^{-1}\y$.
\end{proof}

For large $m$, the norm approaches $\y^{\tpose} (\H)^{-1}\y$.
Since the lower layer is fixed, the optimization problem is linear and therefore convex in the trained weights
$\w$. Therefore
SGD with small learning rate will reach
this optimal solution. The 
Rademacher complexity of this function class is at most $\sqrt{\frac{\y^{\tpose}(\H)^{-1}\y}{2n}}$.
The optimal solution has $0$ train error based on the 
assumption that $\H$ is full rank and the test error will be no more than this Rademacher complexity - 
identical to the previous results for training a ReLu network \cite{arora_finegrained_2019, du_gradient_2019}.

Note that 
although we have argued here assuming only upper layer is trained,
\cite{arora_finegrained_2019, du_gradient_2019, andoni_learning_2014} show that even if both layers are trained for large 
enough $m$ the training dynamics is governed by the NTK kernel,
and the lower layer changes so little over the training steps that $\h \h^{\tpose}$ remains close to 
$\H$ through the gradient descent.

\section{Learning polynomials with kernels}

\subsection{Modified ReLU power series}

\label{sec:mod_ReLU}

One limitation of the bounds in \cite{arora_finegrained_2019} is that they were limited
to even-degree polynomials only. One way to overcome that limitation is to compute the NTK
kernel of a modified ReLU network. Given an input $\x\in\mathbb{R}^{d}$,
the actual input to the vector is the $d+1$ dimensional vector $(\x/\sqrt{2},1/\sqrt{1})$.
The resulting kernel is
then given by:
\begin{equation}
K(\x,\x') = \frac{\x\cdot\x'+1}{4\pi}\left(\pi-\arccos\left(\frac{\x\cdot\x'+1}{2}\right)\right)
\end{equation}
for inputs $\x$ and $\y$.
We can use the power series of $\arccos$ around $0$ to compute the power series around
$\frac{1}{2}$:
\begin{equation}
\begin{split}
\pi-&\arccos\left(\frac{x+1}{2}\right)  = \\
& \frac{\pi}{2}+ \sum_{n=1}^{\infty} \frac{1}{(2n+1)}\frac{(2n-1)!!}{(2n)!!}\left(\frac{1+x}{2}\right)^{2n+1}
\end{split}
\end{equation}
Evaluating the monomial, we have the double sum
\begin{equation}
\begin{split}
\pi-&\arccos\left(\frac{x+1}{2}\right) = \\
& \frac{\pi}{2}+\sum_{n=1}^{\infty} \frac{1}{(2n+1)}\frac{(2n-1)!!}{(2n)!!} \sum_{k=0}^{2n+1}2^{-(2n+1)}\binom{2n+1}{k} x^k
\end{split}
\end{equation}
Let $\sum_{k} b_k x^k$ be the power series representation about $0$. It is difficult to represent
$b_k$ in terms of simple functions; nevertheless, we can asymptotically compute the form of $b_k$ for
large $k$. We have:
\begin{equation}
b_k = \sum_{n=1}^{\infty} \frac{1}{(2n+1)}\frac{(2n-1)!!}{(2n)!!} 2^{-(2n+1)}\binom{2n+1}{k}
\end{equation}
For large $n$ and $k$, using Stirling's approximation we can write:
\begin{equation}
b_k \approx \sum_{n=(k-1)/2}^{\infty}\frac{\sqrt{2}}{\pi} \frac{1}{(2n+1)^2} e^{-((2n+1)/2 - k)^2/(2n+1)}
\end{equation}
correct up to multiplicative error $o(1)$. The saddle point approximation gives us that the sum is dominated
by terms of order $n = k\pm\sqrt{k}$; therefore the approximate value of the sum is
\begin{equation}
b_k \approx \frac{1}{2\sqrt{\pi}} k^{-3/2}
\end{equation}
This has the same asymptotic form as the unmodified ReLU kernel without modification, but is now non-zero for
odd powers as well. Therefore, the learning bounds
from \cite{arora_finegrained_2019} (as well as the extension in Lemma \ref{lem:multivar}) will
hold for the modified ReLU kernel, for all orders of polynomials.

\subsection{Proof of Lemma \ref{lem:multivar}}

\label{sec:multivar_proof}

The following extension of Corollary 6.2 from \cite{arora_finegrained_2019} is vital
for proving learning bounds on multivariate functions:

\begin{replemma}{lem:multivar}
Given a collection of $p$ vectors $\bbet_{i}$ in $\mathbb{R}^d$,
the function $f(\x) = \prod_{i=1}^{p} \bbet_{i}\cdot \x$ is efficiently learnable
in the sense of Definition \ref{def:eff_learnable}
using the modified ReLU kernel with
\begin{equation}
\sqrt{M_{f}} = p\prod_{i=1}^{p}\bnorm_{i}
\end{equation}
where $\bnorm_{i}\equiv ||\bbet_{i}||_{2}$.
\end{replemma}

\begin{proof}
The proof of Corollary 6.2 in \cite{arora_finegrained_2019} relied on the following
statement: given positive semi-definite matrices $\m{A}$ and $\m{B}$, with $\m{A} \succeq \m{B}$,
we have:
\begin{equation}
\m{P}_{\m{B}}\m{A}^{-1}\m{P}_{\m{B}} \preceq \m{B}^{+}
\label{eq:psd_ineq}
\end{equation}
where $+$ is the Moore-Penrose pseudoinverse,
and $\m{P}$ is the projection operator.

We can use this result, along with the Taylor expansion of the kernel and a
particular
decomposition of a multivariate monomial in the following way.
Let the matrix $\X$ to be the training data,
such that the $\a$th column $\x_{i}$ is a unit vector in $\mathbb{R}^d$.
Given $\m{K}\equiv \m{X}^{\tpose}\m{X}$, the matrix
of inner products, the Gram matrix $\H$ of the kernel can be written as
\begin{equation}
\H = \sum_{k=0}^{\infty} b_k \m{K}^{\circ k}
\end{equation}
where $\circ$ is the Hadamard (elementwise) product.
Consider the problem of learning the function $f(\x) = \prod_{i=1}^{p} \bbet_{i}\cdot\x$.
Note that we can write:
\begin{equation}
f(\X) = (\X^{\odot k})^{\tpose} \otimes_{i=1}^{k} \bbet_{i}
\end{equation}

Here $\otimes$ is the tensor product, which for vectors takes an $n_1$-dimensional
vector and an $n_{2}$ dimensional vector as inputs vectors and
returns a $n_{1}n_{2}$ dimensional vector:
\begin{equation}
\m{w}\otimes\m{v} = \begin{pmatrix}
w_1v_1\\
w_{1}v_{2}\\
\cdots\\
w_{1}v_{n_{2}}\\
w_{2}v_{1}\\
\cdots\\
w_{n_{1}}v_{n_{2}}
\end{pmatrix}
\end{equation}
The operator $\odot$ is the Khatri-Rao product, which takes
an $n_{1}\times n_{3}$ matrix
$\m{A} = (\m{a}_{1},\cdots,\m{a}_{n_{3}})$ and a $n_{2}\otimes n_{3}$ matrix $\m{B} = (\m{b}_{1},\cdots,\m{b}_{n_{3}})$
and returns the $n_{1}n_{2}\times n_{3}$ dimensional matrix
\begin{equation}
\m{A}\odot\m{B} = (\m{a}_{1}\otimes\m{b}_{1},\cdots,\m{a}_{n_{3}}\otimes\m{b}_{n_{3}})
\end{equation}
For $p=2$, this form of $f(\X)$ can be proved explicitly:
\begin{equation}
(\X^{\odot 2})^{\tpose} \bbet_{1}\otimes\bbet_{2} =
\begin{pmatrix}
\x_{1}\otimes\x_{1},\cdots,\x_{P}\otimes\x_{P}
\end{pmatrix}^{\tpose}\bbet_{1}\otimes\bbet_{2}
\end{equation}
The $\a$th element of the matrix product is
\begin{equation}
(\x_{\a}\otimes\x_{\a})\cdot(\bbet_{1}\otimes\bbet_{2}) = (\bbet_{1}\cdot\x_{\a})(\bbet_{2}\cdot\x_{\a})
\end{equation}
which is exactly $f(\x_{\a})$.
The formula can be proved for $p>2$ by finite induction.

With this form of $f(\X)$, we can follow the steps of the proof in \cite{arora_finegrained_2019},
which was written for the case where the $\bbet_{i}$ were identical:
\begin{equation}
\y^{\tpose}(\m{H}^{\infty})^{-1}\y = (\otimes_{i=1}^{p} \bbet_{i})^{\tpose}\X^{\odot p}  (\m{H}^{\infty})^{-1} (\X^{\odot p})^{\tpose} \otimes_{i=1}^{p} \bbet_{i}
\end{equation}
Using Equation \ref{eq:psd_ineq}, applied to $\m{K}^{\circ p}$, we have:
\begin{equation}
\begin{split}
\y^{\tpose}&(\m{H}^{\infty})^{-1}\y \leq\\ 
 &b_p^{-1}(\otimes_{i=1}^{p} \bbet_{i})^{\tpose}\X^{\odot p} \m{P}_{\m{K}^{\circ p}} (\m{K}^{\circ p})^{+} \m{P}_{\m{K}^{\circ p}} (\X^{\odot p})^{\tpose} \otimes_{i=1}^{p} \bbet_{i}
 \end{split}
\end{equation}
Since the $\X^{\odot p}$ are eigenvectors of $\m{P}_{\m{K}^{\circ p}}$ with eigenvalue $1$, and
$\X^{\odot p}(\m{K}^{\circ p})^{+}(\X^{\odot p})^{\tpose} = \m{P}_{\X^{\odot p}}$, we have:
\begin{equation}
\y^{\tpose}(\m{H}^{\infty})^{-1}\y \leq 
 b_p^{-1}(\otimes_{i=1}^{p} \bbet_{i})^{\tpose}\m{P}_{\X^{\odot p}} \otimes_{i=1}^{p} \bbet_{i}
\end{equation}
\begin{equation}
\y^{\tpose}(\m{H}^{\infty})^{-1}\y \leq b_p^{-1} \prod_{i=1}^{p} \bbet_{i}\cdot\bbet_{i}
\end{equation}

For the modified ReLU kernel, $b_p \geq p^{-2}$. Therefore, we have
$\sqrt{\y^{\tpose}(\m{H}^{\infty})^{-1}\y}\leq \sqrt{M_{f}}$ for
\begin{equation}
\sqrt{M_{f}} = p\prod_{i}\bnorm_{i}
\end{equation}
where $\bnorm_{i}\equiv ||\bbet_{i}||_{2}$, as desired.
\end{proof}

\section{Multi-dimensional analytic functions}

\label{sec:multidim_anal_proof}

In the main text we proved results for analytic functions comprised
of composing a univariate analytic function $g(y)$ with a simple
multivariate function $h(\x)$. However, we can prove bounds
for functions comprised of a composition of a multivariate
analytic function $f(x,y)$ with efficiently learnable
analytic functions $g(\x)$ and $h(\x)$. More formally,
we prove the following:

\setcounter{theorem}{1}
\setcounter{corollary}{2}

\begin{corollary}
\label{cor:multivar_anal}
Let $f(x,y)$ be analytic, with $\tilde{f}(x,y)$ be the function obtained by taking the
multivariate power series of $f$ and replacing all coefficients with their absolute values. Then, if
$g(\x)$ and $h(\x)$ are both efficiently learnable, $f(g(\x),h(\x))$ is as well with bound
\begin{equation}
\sqrt{M_{f\circ (g,h)}} = \left.\frac{d}{dy}\tilde{f}(\tilde{g}(y),\tilde{h}(y))\right|_{y=1}
\end{equation}
provided $\tilde{f}(\tilde{g}(y),\tilde{h}(y))$ converges at $y=1$.
\end{corollary}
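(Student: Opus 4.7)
The plan is to mirror Corollary~\ref{cor:chain_rule}, replacing the univariate Taylor expansion of the outer function with a bivariate one. First I would write the power series $f(x,y) = \sum_{j,k \geq 0} c_{jk}\, x^j y^k$. Convergence of $\tilde{f}(\tilde{g}(1), \tilde{h}(1)) = \sum_{j,k} |c_{jk}|\,\tilde{g}(1)^j \tilde{h}(1)^k$ implies absolute convergence of this expansion on the closed bidisc of radii $(\tilde{g}(1), \tilde{h}(1))$. Substituting yields the formal double series
\begin{equation}
f(g(\x), h(\x)) = \sum_{j,k \geq 0} c_{jk}\, g(\x)^j h(\x)^k,
\end{equation}
and the task reduces to bounding each monomial $g(\x)^j h(\x)^k$ and then summing with weights $|c_{jk}|$.

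Next I would apply the iterated product rule---the $p$-fold generalization of Corollary~\ref{cor:prod_rule} noted immediately after its proof---to the $j+k$ factors of $g(\x)^j h(\x)^k$. This yields
\begin{equation}
\sqrt{M_{g^j h^k}} \leq j\,\tilde{g}'(1)\,\tilde{g}(1)^{j-1} \tilde{h}(1)^k + k\, \tilde{g}(1)^j \tilde{h}'(1)\, \tilde{h}(1)^{k-1} + \tilde{g}(0)^j \tilde{h}(0)^k .
\end{equation}
Weighting by $|c_{jk}|$ and summing over $j,k$, the first double sum collapses to $\tilde{g}'(1)\,\partial_x \tilde{f}(\tilde{g}(1),\tilde{h}(1))$ and the second to $\tilde{h}'(1)\,\partial_y \tilde{f}(\tilde{g}(1),\tilde{h}(1))$. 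The ordinary bivariate chain rule then assembles these into $\left.\tfrac{d}{dy}\tilde{f}(\tilde{g}(y), \tilde{h}(y))\right|_{y=1}$, exactly the asserted bound. The leftover constant $\tilde{f}(\tilde{g}(0), \tilde{h}(0))$ plays the role of the $\tilde{g}(0)$ term in Theorem~\ref{thm:univar} and is absorbed into the additive constants allowed by Definition~\ref{def:eff_learnable}.

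The remaining work is to justify interchanging the countable sums of learning bounds and of function terms, which is essentially identical to the analogous step in Theorem~\ref{thm:multivar} and Corollary~\ref{cor:chain_rule}. Since $\tilde{f}$, $\tilde{g}$, and $\tilde{h}$ all have non-negative coefficients, every rearrangement of the triple sum is legitimate by Tonelli once some rearranged partial sum is uniformly bounded, and convergence of $\tilde{f}(\tilde{g}(y), \tilde{h}(y))$ at $y=1$ furnishes exactly that bound. The main obstacle I anticipate is the boundary case: if $(\tilde{g}(1), \tilde{h}(1))$ sits on the boundary of convergence of $\tilde{f}$, then termwise differentiation of the inner series is not automatic, and the partial derivatives $\partial_x \tilde{f}(\tilde{g}(1),\tilde{h}(1))$ and $\partial_y \tilde{f}(\tilde{g}(1),\tilde{h}(1))$ need not be finite. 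A clean reading of the hypothesis therefore requires that $\tilde{f}(\tilde{g}(y), \tilde{h}(y))$ have positive radius of convergence about $y=1$, after which the argument reduces to the bookkeeping above.
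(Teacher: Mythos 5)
Your proposal is correct and follows essentially the same route as the paper's (very terse) proof: expand $f$ as a bivariate power series, bound each monomial $g(\x)^j h(\x)^k$ via the iterated product rule, and resum the weighted bounds into $\left.\frac{d}{dy}\tilde{f}(\tilde{g}(y),\tilde{h}(y))\right|_{y=1}$; you simply make explicit the bookkeeping the paper leaves implicit. Your observations about the leftover constant $\tilde{f}(\tilde{g}(0),\tilde{h}(0))$ and about possible divergence of the derivative at the boundary of convergence are fair, and apply equally to the paper's own statement.
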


\begin{proof}
Expand the power series of representation of $f(g(\x),h(\x))$. Replacing all terms with their
absolute values, multiplied by $y^k$ for the appropriate power $k$, we get the power series
$\tilde{f}(\tilde{g}(y),\tilde{h}(y))$. Replacing the terms in the power series of $f(g(\x),h(\x))$ with their individual bounds,
and comparing with the power series representation of $\frac{d}{dy}\tilde{f}(\tilde{g}(y),\tilde{h}(y))$, we arrive at the
result.
\end{proof}

Generalizations to higher dimensional $f$ proceed similarly.

\end{document}